\def\eqref#1{equation~\ref{#1}}
\def\Eqref#1{Equation~\ref{#1}}
\def\1{\bm{1}}
\DeclareMathAlphabet{\mathsfit}{\encodingdefault}{\sfdefault}{m}{sl}
\SetMathAlphabet{\mathsfit}{bold}{\encodingdefault}{\sfdefault}{bx}{n}
\newcommand{\bfA}{{\bf A}}
\newcommand{\bfE}{{\bf E}}
\newcommand{\bfI}{{\bf I}}
\newcommand{\bfK}{{\bf K}}
\newcommand{\bfW}{{\bf W}}
\newcommand{\bfX}{{\bf X}}
\newcommand{\bfY}{{\bf Y}}
\newcommand{\bfa}{{\bf a}}
\newcommand{\bfb}{{\bf b}}
\newcommand{\bfc}{{\bf c}}
\newcommand{\bfs}{{\bf s}}
\newcommand{\bfx}{{\bf x}}
\newcommand{\bfy}{{\bf y}}
\newcommand{\bfq}{{\bf q}}
\newcommand{\bfn}{{\bf n}}
\newcommand{\bfr}{{\bf r}}
\newcommand{\bfz}{{\bf z}}
\newcommand{\hf}{{\frac 12}}
\newcommand{\grad}{{\boldsymbol \nabla}}
\newcommand{\bfepsilon}{{\boldsymbol \epsilon}}
\newcommand{\bftheta}{{\boldsymbol \theta}}
\newtheorem{theorem}{Theorem}[section]
\theoremstyle{definition}
\newtheorem{example}[theorem]{Example}
\theoremstyle{remark}
\numberwithin{equation}{section}
\begin{document}

\title{An Over Complete Deep Learning Method for Inverse Problems}

%    Remove any unused author tags.

%    author one information

\author[Moshe Eliasof]{Moshe Eliasof$^{*}$}
\thanks{$^{*}$Department of Applied Mathematics and Theoretical Physics, University of Cambridge, Cambridge, UK. {\tt me532@cam.ac.uk }}

\author[Eldad Haber]{Eldad Haber$^{**}$}
\address{}
\thanks{$^{**}$Department of EOAS, University of British Columbia, Vancouver, BC, Canada. {\tt ehaber@eoas.ubc.ca}}

\author[Eran Treister]{Eran Treister$^\dag$}
\address{}
\curraddr{}
\email{}
\thanks{$^\dag$Department of Computer Science, Ben-Gurion University of the Negev, Beer Sheva, Israel.  {\tt erant@cs.bgu.ac.il}\\
This research was supported by the Israeli Council for Higher Education (CHE) via the Data Science Research Center at Ben-Gurion University of the Negev.}

\subjclass[2020]{68Q25, 68U05,68T09}

\keywords{Inverse Problems, Convolutional Neural Networks, Regularization}

\date{}

\dedicatory{}

\begin{abstract}
Obtaining meaningful solutions for inverse problems has been a major challenge with many applications in science and engineering. Recent machine learning techniques based on proximal and diffusion-based methods have shown promising results. However, as we show in this work, they can also face challenges when applied to some exemplary problems. We show that similar to previous works on over-complete dictionaries, it is possible to overcome these shortcomings by embedding the solution into higher dimensions. The novelty of the work proposed is that we {\bf jointly} design and learn the embedding and the regularizer for the embedding vector. 
We demonstrate the merit of this approach on several exemplary and common inverse problems.
\end{abstract}

\maketitle

\section{Introduction}
\label{sec1}

The solution of inverse problems entails the estimation of a model (solution) based on measured data, which is often noisy and ill-posed in nature \cite{taran, parker, somersallo, Tenorio2011}. 
These challenging problems arise in diverse fields such as geophysics \cite{taran}, astronomy, medical imaging \cite{nagyHansenBook}, and remote sensing \cite{vogel2002computational}. 
Given the ill-posed nature of the considered problems and the presence of noisy data, the application of regularization techniques becomes essential to achieve a stable and meaningful estimate of the model. Conventional regularization techniques involve using specific functions tailored to acquire desired properties, like Total-Variation \cite{RudinOsherFatemi92} or  $\ell_2$ smoothness \cite{Tenorio2011}. Alternatively, some methods leverage a-priori estimates of the model statistics, such as Gaussianity \cite{taran, somersallo}.

The art and sophistication in solving an inverse problem is to balance the {\em prior} knowledge about the solution and the {\em likelihood}, that is, the data fit of the predicted solution.
The solution is derived as a combination of regularization and data-fitting functions, and it represents a compromise between the data fit
and the prior. Traditionally, the prior is perceived to be less credible than the likelihood, as the likelihood is directly tied to the data of the problem to be solved.

In recent years,  machine learning has facilitated the development of data-driven regularization techniques and prior estimation. To this end,  supplementary data, aside from the measured data, contains many plausible solutions to the inverse problem. This additional data is then utilized to learn a regularization procedure, aiming to achieve superior results compared to traditional methods.
There are two main approaches for using available data for learning how to solve inverse problems.
The 
\emph{first} is problem-specific, that is, an end-to-end approach, where the regularization process is learned in the context of the particular inverse problem at hand. Such an approach was presented first in
\cite{haten} and then significantly improved in learning proximal maps by \cite{parikh2014proximal, mardani2018neural, jin2017deep, adler2017solving, mukherjee2021learning, drip} and reference within. The \emph{second} is learning a prior independently of the inverse problem, and then using the prior for the solution of the problem. This approach has been proposed in several recent works that utilize diffusion models to learn the prior \cite{chung2022diffusion, chung2022come, chung2022improving}.
Nonetheless, regardless of the approach used to learn the regularization function, in all the considered methods, the regularization is applied to the solution directly, that is, in its original coordinates. In other words, the regularization function uses the original properties and landscape of the solution space to measure its goodness. Therefore, the landscape of the regularization function may be highly non-convex and "unfriendly" to optimization procedures, especially those that use first-order methods such as gradient descent, with or without stochastic sampling nuances, such as Langevin dynamics. This is a well-known problem for optimization methods that operate on low dimensions in the original solution space (see \cite{taran,  nw}).

To this end, a technique that has been widely successful in the past was to embed the solution using an over-complete  dictionary (see \cite{chen2001atomic, CandesRombergTao2006, eladReview}
and references within). In this approach, one uses an over-complete dictionary and embeds the solution in higher dimensions than the original solution space. It is important to note that in this technique, the regularization is applied to the {\bf embedded solution vector} rather than the original solution vector. Canonical methods use $\ell_0$ and $\ell_1$ norms for the regularization of the embedding vector. These techniques have produced plausible and meaningful solutions, whether the dictionaries were learned or predefined, even though the regularization function was very simple, like the $\ell_1$ norm. We therefore propose to use similar concepts to inverse problem neural solution techniques. 

{\bf The contributions of this paper} are as follows: (i) We show how embedding-based techniques can be derived and learned in the context of contemporary data-driven regularization techniques. We show that by learning the embedding dictionaries {\bf and} the regularization function that operates on the embedded solution vector, one can obtain regularization functions that are "friendlier" to gradient-based optimization methods that are then utilized to solve the inverse problem at hand. (ii) 
Furthermore, we introduce two unrolled versions of the algorithm that can be interpreted as 
dynamical system in high dimensions that can bypass the highly nonlinear landscape of the problem in its original coordinates. 
Similar to other unrolled versions of an optimization process \cite{adler2017solving}, the unrolling allows for greater expressiveness and outperforms shared weights algorithms.
(iii) We give theoretical justification to the methods and show that while deep networks can be highly nonlinear with respect to the weights, it is possible to construct a {\em convex} functional that upon differentiation, leads to a deep neural network. Finally,
(iv) By examining several common inverse problems, we demonstrate that common architectures and approaches that use the original coordinates of the solution, can be significantly challenged while embedding-based techniques converge to meaningful solutions.

{\bf Connection to prior work:} Our method can be viewed as an extension of two popular and separate lines of techniques proposed for the solution of inverse problems. The first is using over-complete dictionaries, which was proposed in \cite{CandesRombergTao2006, chen2001atomic} and followed by many successful algorithms and implementations (see \cite{eladReview} and references within). Second, our work extends the incorporation of learning regularization priors \cite{haten, parikh2014proximal, mardani2018neural, jin2017deep, adler2017solving, mukherjee2021learning} by embedding the solution.
For learning the embedding, existing algorithms seek to find the optimal embedding over-complete dictionary (see
\cite{aharon2006k,aharon2006uniqueness,HoreshHaber2011,kasiviswanathan2012online,agarwal2014learning,huang2013optimal} and references within). In contrast, such embedding was not used in the context of learning regularization. Our work combines and extends both approaches by {\bf jointly} designing and learning an embedding and a regularization function in the high-dimensional embedded space. 

The rest of the paper is organized as follows. In Section~\ref{sec2}, we provide a mathematical background, accompanied by a  motivating example. In Section~\ref{sec3}, we reformulate the problem and show how it can be embedded in high dimensions and how such an embedding yields an easier problem to solve compared to the original problem. In Section~\ref{sec:architectures}, we propose architectures for the solution of the problem and discuss how to train the network. In Section~\ref{sec4}, we conduct several numerical experiments to demonstrate our approach, and 
Section~\ref{sec5} summarizes the paper.

\section{Mathematical Background and Motivation}
\label{sec2}

We first introduce the required mathematical background, followed by a simple, yet important example that demonstrates the shortcomings of existing inverse problem solution methods in deep learning frameworks.

\textbf{Problem formulation.} We consider the estimation of a discrete \emph{model} $\bfx \in \mathbb{R}^N$ from the measured data $\bfb \in \mathbb{R}^M$, and the relation between $\bfx$ and $\bfb$ is given by
\begin{eqnarray}
    \label{forprob}
    \bfA(\bfx) + \bfepsilon = \bfb.
\end{eqnarray}
Here, the forward mapping $\bfA:\mathbb{R}^N \rightarrow \mathbb{R}^M$ can be either linear or nonlinear. For simplicity, now consider linear inverse problems.
The vector $\bfepsilon$ is a noise vector that is assumed to be Gaussian with $0$ mean and $\sigma^2 \bfI$ covariance.
The forward mapping, $\bfA$, typically has a large effective null-space, which implies
that there are infinitely many models $\bfx$ that correspond to the same data, $\bfb$. 

\textbf{Traditional inverse problem solution methods.} We now provide a brief review of traditional estimation techniques for the model $\bfx$ given observed data $\bfb$, the forward mapping $\bfA$, and the statistics of the noise $\epsilon$. Let us first consider a Bayesian point of view for the recovery of the solution of the inverse problem. Assume that the model $\bfx$ is associated with a Gibbsian  prior probability density function $\pi(\bfx)$ of the form
\begin{eqnarray}
    \label{prior}
    \pi(\bfx) \propto  \exp\left(-R(\bfx) \right).
\end{eqnarray}
Then, the posterior distribution of $\bfx$ given the data $\bfb$ can be written as
\begin{eqnarray}
    \label{posterior}
    p(\bfx|\bfb) \propto \exp\left(-{\frac {1}{ 2\sigma^{2}}} \|\bfA \bfx - \bfb\|^2-R(\bfx) \right).
\end{eqnarray}

To obtain a solution (or a family of solutions), one may look at a particular procedure that uses the posterior. One popular approach is to use the Maximum A-Posteriori (MAP) \cite{degroot2005optimal, taran} estimate that maximizes the posterior by solving the optimization
problem
\begin{eqnarray}
    \label{xmap}
    \bfx_{\rm map} = {\rm arg}\min {\frac 1{2\sigma^{2}}} \|\bfA \bfx - \bfb\|^2 + R(\bfx).
\end{eqnarray}
The solution can be achieved by gradient descent iterations of the form
\begin{eqnarray}
    \label{xmapSolve}
    \bfx_{k+1} =  \bfx_k - \alpha \left(\sigma^{-2} \bfA^{\top}(\bfA \bfx_k - \bfb) + \grad_{\bfx} R(\bfx_k) \right).
\end{eqnarray}
Alternatively, it is possible to sample the posterior with some statistical sampling technique.
For instance, one can use Langevin dynamics \cite{pastor1994techniques}, to obtain a sampler of the form
\begin{eqnarray}
    \label{langSolve}
    \bfx_{k+1} =  \bfx_k - \alpha \left(\sigma^{-2} \bfA^{\top}(\bfA \bfx_k - \bfb) + \grad_{\bfx} R(\bfx_k) \right) + \sqrt{\alpha} \bfn,
\end{eqnarray}
where $\bfn \in N(0, \bfI)$ is a random variable. Also, we note that the use of Langevin dynamics is very popular in diffusion models \cite{yang2022diffusion, croitoru2023diffusion}.

The most common estimation or regularization approaches do not associate $R(\bfx)$ with the log of the prior, and use \Eqref{xmap} with some desired properties of the solution such as  low total-variation \cite{Tenorio2011}. 
By doing so, such traditional methods seek to balance between the prior and the likelihood. The regularization  $R(\bfx)$ is only approximately known,  and in many cases is heuristic-based.
Therefore, the solution $\bfx_{\rm map}$ or the samples obtained via Langevin dynamics from \Eqref{langSolve} represent a compromise between data fidelity (likelihood) that is obtained by minimizing $\| \bfA\bfx_k - \bfb \|{_2^2}$ and the prior incorporated by $R(\bfx_k)$. 
In particular, in most cases, for most traditional priors, the value of the prior probability at $\bfx_{\rm{map}}$ is small. That is, 
the solution to the inverse problem would not be a likely solution if we consider the prior alone. 
Recent techniques seek regions of agreement between the prior and the likelihood. Advances
in probability density estimation suggest that the regularization $R(\bfx)$ can be estimated
from data with greater accuracy compared to heuristic-based approaches such as TV priors, by utilizing a neural network (see \cite{yang2022diffusion,croitoru2023diffusion} and references within). This is a paradigm shift. It implies that we seek solutions $\bfx$ that
are significantly closer to the peak(s) of the prior, if they are to be realistic samples from the prior that also fit the data. 
As we see next, this makes the estimation of the model $\bfx$ substantially more difficult, because we need to derive algorithms that avoid local minima, and to find the global minima of the neural regularize $R(\bfx)$. 

We now provide an example that showcases our discussion above.

\begin{example}{\bf The duathlon problem.} 
\begin{figure}[h]
    \centering
    \includegraphics[width=12cm]{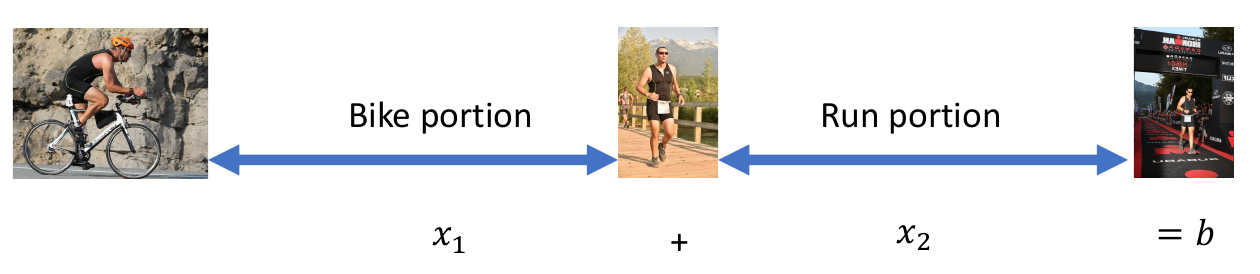}
    \caption{The duathlon problem, estimate the bike and run time from the total competition time.}
    \label{fig:figTri}
\end{figure}
Consider a duathlon that is composed of cycling and running segments.
Assume that we can measure the total time for an athlete to complete
the duathlon, but we are unable to see the time it takes her to finish a
particular segment. The question that we pose is, what was her time spent in each segment.

The mathematical model for this problem is a simple single linear equation, as shown in \Eqref{forprob} with
$\bfA = [1 \quad 1]$ and $\bfx = [x_1, x_2]^{\top}$, where $x_1$ is the time spent on cycling and $x_2$ is the time spent on running. Our goal is: given the single equation $x_1 + x_2 + \epsilon = b$, where only $b$ is available, and $\epsilon$ is a Gaussian noise, estimate $x_1$ and $x_2$. 
Without any prior information, it is impossible to estimate $x_1, x_2$ given the data $b$. However, if we observe previous dualthlete data, we are able to estimate the distribution of the times $x_1$ and $x_2$. Such a distribution is plotted in Figure~\ref{fig:fig_dist}(a). In our example, the distribution is made of $3$ groups. Assume that we measure a data point of an individual athlete (marked in yellow in Figure \ref{fig:fig_dist}(a)). We can try and estimate the individual's cycling and running times $x_1, x_2$ by sampling from the posterior. To this end we use the algorithms discussed in \cite{song2022solving, chung2022improving} 
that uses the prior within a diffusion process.
The result of this approach is presented in Figure~~\ref{fig:fig_dist}(b). 
\begin{figure}
%\resizebox{\linewidth}{!}{
    \centering
        \begin{tabular}{cc}
          \includegraphics[height=4cm]{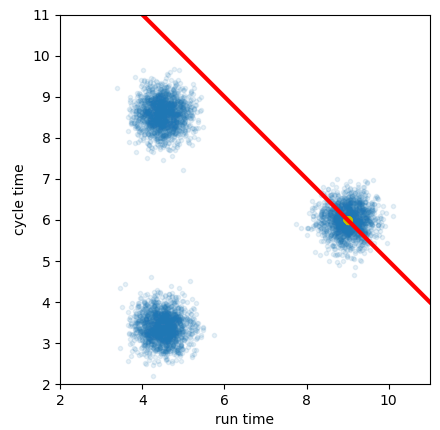}   &
          \includegraphics[height=4cm]{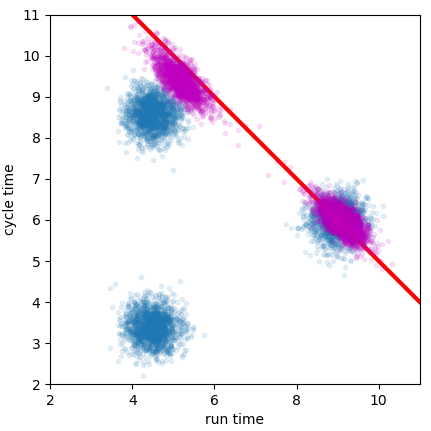} 
          \\
         (a) \small{Prior and likelihood} & (b) 
         \small{Sampling  the Posterior}  
        \end{tabular}%}
    \caption{Experiments with the dualthlon problem. The prior is made of three Gaussians and the data (yellow point) are presented on the left along with all possible points that fit the data (red line). Recovering a model given the data by sampling the posterior using diffusion is presented on the right (magenta points).}
    \label{fig:fig_dist}
\end{figure}
The sampling from the posterior contains two main groups. However, while it is evident that one group is realistic (and indeed the true solution is sampled from this group) the second group is highly unlikely. This is because it does not coincide with the points of the prior. In fact, the points in this group are of very little probability to occur.
The two groups we observe represent two local minima of the posterior where the one on the right is also the global minima.
Nonetheless, starting at many random points, stochastic gradient-based optimization for the posterior is likely to converge to both global and local minima, unless the Langevin dynamics is applied for a very long time. Furthermore, since diffusion-based algorithms typically avoid computing the probability and compute the score (that is, $\grad \log \pi(x)$) instead, it is impossible to quantify the posterior probability of each point that is obtained by a diffusion process.

We thus observe that in this very simple case, diffusion models and other models based on optimization can yield unsatisfactory results. We have observed similar problems for much larger and more realistic problems, such as the estimation of susceptibility from magnetic data and image deblurring with very large point spread functions. For some of these problems, many local minima were observed and it was impossible to find samples that are close to the global minima.
\end{example}

The problem above demonstrates two key shortcomings of existing approaches to solving inverse problems using deep learning frameworks, assuming sufficient reliable training data is available:
\begin{itemize}
    \item The prior can be estimated well and we expect that the posterior will overlap some parts of the prior. Therefore, we seek points that are close to the {\bf global} minima of the posterior. 
    \item Finding a global minimum is a very difficult problem. Nonetheless, with the emergence of complex priors (such as diffusion models), such regularization leads to highly nonconvex problems where many local minima typically exist. 
\end{itemize}
Given these shortcomings, the question is, can we derive an algorithm that can be applied in the face of highly non-convex optimization problems?
We now discuss a methodology that allows us to obfuscate these difficulties.

\section{Reformulating the Solution of Inverse Problems by Embedding and deep learning}
\label{sec3}

As previously discussed, the main issue in solving the inverse problem, both in terms of sampling and in the MAP estimation
is that we face a global optimization problem with many local minima. We now show that by 
reparametrizing the problem and embedding it in higher dimensions it is possible to obtain a more convex problem that is easy to work with and sample
from, and therefore to find more likely solutions to the inverse problem.

\subsection{High Dimensional Solution Embedding}

Let us consider an embedding of the solution $\bfx \in \mathbb{R}^N$ into a vector
$\bfz \in \mathbb{R}^K$ where  $N \ll K$, with an embedding matrix $\bfE:\mathbb{R}^K \rightarrow \mathbb{R}^N$, that is,
\begin{eqnarray}
    \label{embed}
    \bfx = \bfE \bfz
\end{eqnarray}
The vector $\bfz$ is the new variable we work with, and we solve the inverse problem
with respect to $\bfz$ rather than $\bfx$. In what follows, we will learn an embedding and a regularization function that operates on $\bfz$ such that the resulting optimization problem is more attainable to numerical treatment compared with the original one for  $\bfx$.

As we discussed in the introduction, the idea of embedding the solution in a larger space and regularizing the embedding vector has been thoroughly studied and discussed in the field of inverse problems \cite{CandesRombergTao2006} in the context of $\ell_1$ regularization and basis pursuit \cite{chen2001atomic}.
In this case, one replaces the original problem with the $\ell_1$ regularization
\begin{eqnarray}
    \label{zmap}
    \bfz_{\rm map} = {\rm arg}\min {\frac 1 {2\sigma^{2}}} \|\bfA \bfE \bfz - \bfb\|^2 + \gamma\|\bfz \|_1
\end{eqnarray}
which is associated with the prior density  $\pi(\bfz) \propto \exp(-\gamma \|\bfz \|_1)$.
The density in this case is log-concave and, hence, robust convex optimization algorithms
can be used to solve the problem. The complicated part now is to choose
an appropriate embedding matrix $\bfE$. As discussed in the introduction, the main focus of this line of work was to learn an appropriate 
embedding assuming the $\ell_1$ prior.
An extension of \Eqref{zmap} is to  {\bf jointly} learn the embedding matrix $\bfE$ {\bf and} a  regularization function. Furthermore,  we propose an unrolled version of this process that yields a neuro-ordinary differential equation \cite{HaberRuthotto2017, E2017,chen2018neural}.

\subsection{Learnable  Embedding and 
Regularization  in high dimensions}
\label{sec:learnableEmbeddingAndPotential}

\Eqref{zmap} uses a high dimensional embedding, and employs the $\ell_1$ norm as a regularization for 
$\bfz$. However, one can learn a regularization function
$\phi(\bfz, \bftheta)$, with parameters $\bftheta$. This leads to a minimization problem of the form
\begin{eqnarray}
    \label{zmape}
    \bfz_{\rm map} = {\rm arg}\min_{
    \bfz 
    } {\frac 1 {2\sigma^{2}}} \|\bfA \bfE \bfz - \bfb\|^2 + \phi(\bfz,\bftheta)
\end{eqnarray}
By carefully learning both $\bfE$ and $\phi$, we can obtain an optimization problem with favorable properties. 
To understand the rationale behind this we first review the well-known mountain pass theorem that states
\begin{theorem}[Mountain pass]Let $f(\bfx)$ be a bounded function from $\mathbb{R}^N$ to $\mathbb{R}$ with continuous second derivatives.
Let $\bfx_1$ and $\bfx_2$ be local minima of the function, that is
$$ \grad f(\bfx_i) = 0\quad {\rm and} \quad  \grad^2 f(\bfx_i)>0, \quad i=1,2.$$

Then, there exists a path (a mountain pass) defined by
\begin{eqnarray}
\label{mp}
&& \bfx(t) = \bfx_1 + \bfs(t) \\
\nonumber
&&  \bfs(0) = 0, \quad \bfs(1) = \bfx_2-\bfx_1
\end{eqnarray}
such that the function on this path
$ f(\bfx(t))$
have a unique maximum.
\end{theorem}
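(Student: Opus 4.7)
The plan is to cast this as a finite-dimensional mountain-pass / min-max argument. Let $\Gamma$ denote the set of continuous paths $\gamma:[0,1]\to\mathbb{R}^N$ with $\gamma(0)=\bfx_1$ and $\gamma(1)=\bfx_2$, and define
$$c \;=\; \inf_{\gamma\in\Gamma}\,\max_{t\in[0,1]} f(\gamma(t)).$$
Since $f$ is bounded, $c$ is finite. Because $\bfx_1$ and $\bfx_2$ are strict local minima with positive-definite Hessians, small disjoint neighborhoods $U_1,U_2$ of them are separated by a topological ridge, and any $\gamma\in\Gamma$ must cross this ridge. A direct comparison argument shows $c>\max\{f(\bfx_1),f(\bfx_2)\}$, so the maximum along an optimal path cannot be attained at the endpoints.

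Next, I would produce a critical point at level $c$ by a standard deformation argument. If $f$ had no critical point at level $c$, one could use the negative gradient flow of $f$ applied near the highest portion of any near-optimal path to strictly push $\max f\circ\gamma$ below $c$, contradicting the definition of $c$. In our finite-dimensional setting with $C^2$ bounded $f$, the Palais--Smale condition holds on the relevant closed sublevel region (any bounded sequence has a convergent subsequence, and $\nabla f$ is continuous), so the deformation argument yields a critical point $p$ with $f(p)=c$. Since $c$ exceeds the values at the two local minima, $p$ cannot itself be a local minimum, and hence $\nabla^2 f(p)$ must have at least one non-positive eigenvalue; by a small $C^2$-perturbation of $f$ (which preserves the local minima near $\bfx_1,\bfx_2$ and the value $c$ up to an arbitrarily small error) we may assume $p$ is a non-degenerate saddle with Morse index $\geq 1$.

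Finally, I would construct the desired path explicitly from $p$. Because $p$ is a non-degenerate saddle with an unstable manifold of dimension $\geq 1$, the descending gradient flow
$$\dot{\bfy}(\tau)=-\grad f(\bfy(\tau)),$$
starting at $p\pm\delta\bfv$ (where $\bfv$ is an eigenvector for a negative eigenvalue of $\nabla^2 f(p)$) produces two integral curves along which $f$ is strictly decreasing. Generically (and after the perturbation above), one of these curves converges into the basin of $\bfx_1$ and the other into the basin of $\bfx_2$. Concatenating these two descending curves (reversed so that they leave from $\bfx_1$ and $\bfx_2$ and meet at $p$) and reparametrizing by $t\in[0,1]$ yields a $C^2$ path $\bfx(t)=\bfx_1+\bfs(t)$ along which $f\circ\bfx$ is strictly increasing on $[0,\tfrac12]$ and strictly decreasing on $[\tfrac12,1]$, so the maximum is attained uniquely at $t=\tfrac12$ (at the point $p$).

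The main obstacle is the final step: guaranteeing that the two branches of the unstable manifold really descend into the basins of $\bfx_1$ and $\bfx_2$ respectively, rather than into basins of other local minima or into cycles. This is handled by the genericity / small-perturbation argument above, together with a continuity argument showing that the constructed path can be deformed, through the basins of attraction of $\bfx_1$ and $\bfx_2$, to have endpoints exactly at $\bfx_1$ and $\bfx_2$ without introducing new local maxima of $f\circ\bfx$ along the way.
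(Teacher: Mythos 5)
First, a point of comparison: the paper does not actually prove this theorem. It is quoted as a classical fact (``a simple extension of Rolle's theorem'') with a citation to Strang, and the only proof the paper supplies in this section is for the subsequent Mountain Bypass theorem. So there is no in-paper argument to measure yours against; your proposal has to stand on its own, and as written it does not.

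The central gap is your Palais--Smale claim. Boundedness of $f$ together with continuity of $\grad f$ does \emph{not} give the Palais--Smale condition on $\mathbb{R}^N$: a sequence with $f(\bfx_n)\to c$ and $\grad f(\bfx_n)\to 0$ may escape to infinity, and then the deformation argument produces no critical point at the min--max level. This is not a technicality. Composing a bounded increasing function with the classical example $(x^2y-x-1)^2+(x^2-1)^2$ yields a bounded $C^\infty$ function on $\mathbb{R}^2$ with exactly two nondegenerate local minima and \emph{no other critical points}; for such an $f$ your saddle point $p$ does not exist, and everything downstream (the Morse perturbation, the unstable-manifold construction, the concatenated path) collapses. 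Two further problems compound this. (i) You perturb $f$ to make $p$ a nondegenerate saddle, but the theorem is about the given $f$: a path whose composed value has a unique maximum for a perturbed function need not have one for $f$ itself, so this step is not legitimate without an argument transferring the conclusion back. (ii) The step you yourself flag as the main obstacle --- that the two branches of the unstable manifold descend into the basins of $\bfx_1$ and $\bfx_2$ rather than elsewhere --- is never actually closed; ``genericity'' again changes $f$. It is also worth noting that the literal statement only asks for a path along which $f$ has a unique maximizer, not for a critical point at the top: when $\sup f$ strictly exceeds the min--max level one can get this elementarily by inserting a one-point ``spike'' detour through a higher value of $f$, with no variational machinery at all. (In fully degenerate cases, e.g.\ a one-dimensional double well whose barrier has a flat top, no path has a unique maximizer, which is a sign that the theorem is intended informally; any rigorous proof must add a nondegeneracy hypothesis that neither the paper nor your proposal makes explicit.)
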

The mountain pass theorem is a simple extension of Rolle's theorem in basic calculus.
It states that in order to move from one minima of a function to another one has to climb a hill.

The mountain pass theorem \cite{strang1991calculus} assumes that we are limited to the topography of the function $f(\bfx)$.
Increasing the dimension of the problem, we are able to generate a new topography that is favorable to the optimization problem at hand.

This is motivated by the following theorem:
\begin{theorem}[The Mountain Bypass]
\label{theorem:bypass} .
Let $\bfx_1$ and $\bfx_2$ be local minima of $f(\bfx)$
with $f(\bfx_2) \le f(\bfx_1)$.
Assume  some  embedding of the form
$$ \bfx = \bfE \bfz $$
where $\bfE$ is an $n \times k$ full rank matrix with $k>n$.
Assume also that there is a function, $g:R^n \rightarrow R^k$ such that $g(\bfx) = \bfz$,  that uniquely maps the vectors $\bfx \in R^n$ onto
a subspace of the vectors in $R^k$.

Finally, consider a function $\Psi:R^k \rightarrow R$ such that
$$ \Psi(g(\bfx)) = f(\bfx)$$

Then, there exist an embedding $\bfE$ and a function $\Psi$ such that if 
$\bfx_1$ and $\bfx_2$ are local minima of $f$
and $\bfz_1 = g(\bfx_1)$ and $\bfz_2 = g(\bfx_2)$
then there is a continuous path $\bfz(t), 0\le t \le 1$ such that
$\bfz(0) = \bfz_1$, $\bfz(1) = \bfz_2$ and
$$ {\frac {d\Psi(\bfz(t))}{dt}} \le 0, \quad 0 \le t \le 1 $$
That is, there is a path $\bfz(t)$ that bypasses the mountain pass when we are unlimited by the topography given in the space spanned by $\bfx$.
\end{theorem}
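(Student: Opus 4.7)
The plan is to prove the Mountain Bypass theorem by an explicit constructive argument that exploits the freedom to choose both the embedding $\bfE$ and the potential $\Psi$ away from the image of $g$. Since $g$ only pins $\Psi$ down on an $n$-dimensional subspace of $\mathbb{R}^k$, the extra $k-n$ coordinates give us room to lift a descent path above the original topography of $f$.

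First, I would make the simplest full-rank choice of embedding, $\bfE = [\bfI_n \,|\, \mathbf{0}_{n\times(k-n)}]$, and define $g(\bfx) = (\bfx, \mathbf{0})^{\top} \in \mathbb{R}^k$, so that $\bfE g(\bfx) = \bfx$ identically and the image $V := g(\mathbb{R}^n) = \mathbb{R}^n \times \{\mathbf{0}\}$ is an $n$-dimensional linear subspace of $\mathbb{R}^k$. The relation $\Psi(g(\bfx)) = f(\bfx)$ then forces $\Psi|_V$ to coincide with $f$ under the natural identification, but leaves $\Psi$ entirely unconstrained on $\mathbb{R}^k \setminus V$.

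Next, I would construct an explicit bypass path $\bfz:[0,1]\to\mathbb{R}^k$ in three stages: (i) lift $\bfz_1$ out of $V$ along the new coordinate direction $\bfe_{n+1}$ to $\bfp_1 := (\bfx_1, h\bfe_1^{k-n})^{\top}$ for some height $h>0$; (ii) linearly interpolate at height $h$ from $\bfp_1$ to $\bfp_2 := (\bfx_2, h\bfe_1^{k-n})^{\top}$; (iii) lower $\bfp_2$ back to $\bfz_2$. The interior of this path lies in $\mathbb{R}^k \setminus V$, so $\Psi$ may be prescribed there arbitrarily. I would choose a continuous monotonically non-increasing interpolant $\psi:[0,1]\to\mathbb{R}$ with $\psi(0) = f(\bfx_1)$ and $\psi(1) = f(\bfx_2)$ — for example $\psi(t) = (1-t)f(\bfx_1) + t f(\bfx_2)$, which is non-increasing because the hypothesis $f(\bfx_2) \le f(\bfx_1)$ is the entire point where the local-minimum assumptions enter — and set $\Psi(\bfz(t)) := \psi(t)$ for $t \in (0,1)$. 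Since this prescription agrees with $\Psi|_V$ at the interface points $t=0,1$, it defines a continuous function on the closed set $V \cup \bfz([0,1]) \subset \mathbb{R}^k$. Invoking the Tietze extension theorem (or a partition-of-unity construction) then yields a continuous $\Psi:\mathbb{R}^k \to \mathbb{R}$ extending this data, and the chain rule gives $\frac{d}{dt}\Psi(\bfz(t)) = \psi'(t) \le 0$ along the path, which is exactly the required descent property.

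The main obstacle is regularity rather than existence: the stated derivative condition implicitly requires $\bfz$ and $\Psi$ to be smooth enough for $\frac{d\Psi(\bfz(t))}{dt}$ to make sense. To address this I would round the two corners of the piecewise-linear path with short $C^1$ arcs, take $\psi$ to be a smooth monotone interpolant (e.g.\ a mollified linear blend), and replace Tietze by Whitney's extension theorem to obtain a $C^1$ (or smoother) $\Psi$. A small bookkeeping step is verifying that the smoothed path still avoids $V$ on $(0,1)$ so that the two prescriptions of $\Psi$ remain consistent; this is automatic provided $h$ is chosen positive and the smoothing support is small. Notably, local minimality of $\bfx_1,\bfx_2$ plays no role beyond yielding the ordering $f(\bfx_2)\le f(\bfx_1)$, which makes the result genuinely a statement about the topological freedom gained by lifting into a higher-dimensional embedding space.
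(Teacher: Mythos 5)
Your construction is correct in substance, but it takes a genuinely different route from the paper's proof. The paper works with the decomposition $\bfz = \bfX\bfx + \bfY\bfy$ (active and null spaces of $\bfE$, with $\bfE\bfX = \bfI$, $\bfE\bfY = 0$) and posits a globally structured potential $\Psi(\bfz) = f(\bfE\bfz) + \Omega(\bfY^{\top}\bfz)$ with $\Omega(0)=0$ free; the path follows the mountain pass in the active component, $\bfx(t) = \bfx_1 + \bfs(t)$, while the null-space component traces $\bfy(t) = t(1-t)\bfy$, and the free choice of $\Omega$ is invoked to make $f(\bfx(t)) + \Omega(\bfy(t))$ decrease. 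You instead prescribe $\Psi$ only on $V = g(\mathbb{R}^n)$ and on an explicit non-self-intersecting lift--traverse--descend path, then extend by Tietze/Whitney. Each approach buys something: the paper's additive form is exactly the ansatz it reuses in the learning formulation ($R(\bfE\bfz) + \Omega(\bfY^{\top}\bfz)$), so the proof doubles as a design principle, whereas your version is less explicit about what $\Psi$ looks like globally. On the other hand, your path sidesteps a subtlety the paper glosses over: because $\bfy(t) = t(1-t)\bfy$ retraces the same ray, $\Omega(\bfy(t))$ is automatically symmetric about $t = 1/2$, which constrains how it can cancel the hill in $f(\bfx(t))$ and makes the final ``choose $\Omega$ as we please'' step less innocent than it appears; your injective path carries no such constraint. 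Two small points of care in your write-up: the Whitney step needs consistent first-order jet data at the junction points $\bfz_1,\bfz_2$ where the path meets $V$ --- this works precisely because $\grad f(\bfx_i) = 0$ at the local minima, so the tangential data there is zero and the normal derivative can be matched to $\psi'$ --- and your closing remark that local minimality enters only through $f(\bfx_2)\le f(\bfx_1)$ should be tempered accordingly, since minimality is also what makes the jet compatibility at the endpoints painless.
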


\begin{proof} 
We prove the theorem by a simple construction.
We choose $g(\bfx) = \bfX \bfx$
where $\bfX = \bfE^{\dag}$.
We can then decompose $\bfz$ into two parts, one in the active space of $\bfE$ and one in the null space of $\bfE$, that is
\begin{eqnarray}
    \label{decomp}
    \bfz = \bfX \bfx + \bfY \bfy
\end{eqnarray}
where the matrices $\bfX$ and $\bfY$ are the active and null space of $\bfE$, that is
$$ \bfE \bfX = \bfI  \quad \bfE \bfY = 0 \quad {\rm and} \quad  \bfY^{\top}\bfY = \bfI$$
Given a vector $\bfz$ one could compute $\bfx$ and $\bfy$ by $\bfx = \bfE \bfz$ and $\bfy =  \bfY^{\top}\bfz$.

We then propose to construct $\Psi$ from two parts
\begin{eqnarray}
\Psi(\bfz)  = f(\bfx) + \Omega(\bfy)
= f(\bfE\bfz) + \Omega(\bfY^{\top}\bfz)
\end{eqnarray}
where $\Omega(0) = 0$, is a continuously differentiable function that we can choose as we please.

Let $\bfz_1 = \bfX\bfx_1$ and
$\bfz_2 = \bfX \bfx_2$. Clearly, neither
$\bfz_1$ or $\bfz_2$ has a component in the null space of $\bfE$.

To build a path from $\bfz_1$ to $\bfz_2$ we first 
consider the part in the active space of $\bfE$.
We set the active part of the space as the mountain pass \Eqref{mp}
$$ \bfx(t) = \bfx_1 + \bfs(t) $$
Next, we consider a path in the orthogonal part of $\bfE$ and let
$$\bfy(t) = t(1-t) \bfy$$
for some $\bfy \not=0$.

We then have that
$$ \Psi(\bfz(t)) = f(\bfx(t)) + \Omega(\bfy(t)).$$
Since we are free to choose $\Omega$ as we please we choose it such that $\Psi(\bfz(t))$ is monotonically decreasing for $0 \le t \le 1$. 
\end{proof} 

The mountain overpass theorem is not useful in the context of optimization since it requires the knowledge of the point that minimizes $f(\bfx)$
in order to design the path. Nonetheless, the theorem is very useful in the context of machine learning. In this context, we assume to have a family of functions $f(\bfx, \bfb)$ where $\bfb$
is our data vector, and that the solutions for different $\bfb$'s are known. Furthermore, in many cases, the points that minimize $f$ for different $\bfb$ cluster together. Learning a map that yields a path from a starting point to the appropriate solution is therefore the goal of the learning.

In the context of solving the inverse problem, the original problem with $\bfx$ is changed to the problem
$$ \min_{\bfz} \hf \|\bfA \bfE \bfz - \bfb\|^2 + R(\bfE\bfz) + \Omega(\bfY^{\top} \bfz) $$
The addition of a learnable function $\Omega$ is the one that allows us to move from one minimum to the next without the need to go through the mountain pass.
In the learning process proposed above, rather than learning $R$ and then $\Omega$ we choose a function $\phi(\bfz,\bftheta)$ that yields a smooth path from the starting point $\bfz_1$ to the desired point, $\bfz_2$ that is the solution to the inverse problem.

\bigskip 

We now demonstrate the importance of this Theorem using a simple example.
\begin{example}{\bf Pointwise recovery (denoising) with a double potential prior} \\
Assume that $x \in \mathbb{R}$, and the forward mapping is the identity. That is, $b = x + \epsilon$, where 
$\epsilon$ is a small Gaussian noise. This is the simplest 1-dimensional denoising problem. Assume that the prior of $x$ is a double-well potential of the form 
$$\pi(x) \propto \exp\left(- \gamma^{-1}(x-\mu)^2 (x+\mu)^2 \right).$$
This potential is plotted in Figure~\ref{fig:phi2d}(a).
\begin{figure}
    \centering
        \begin{tabular}{cccc}
          \includegraphics[height=2.6cm]{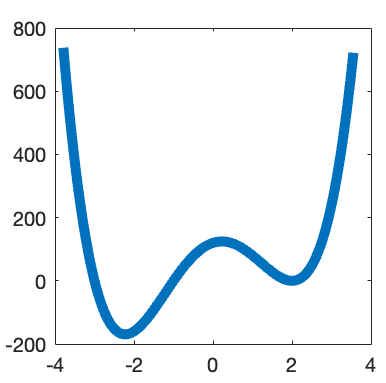}   &
          \includegraphics[height=2.6cm]{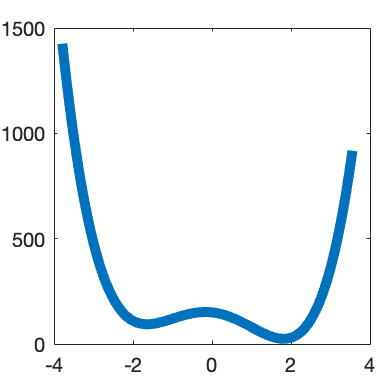}   &
          \includegraphics[height=2.62cm]{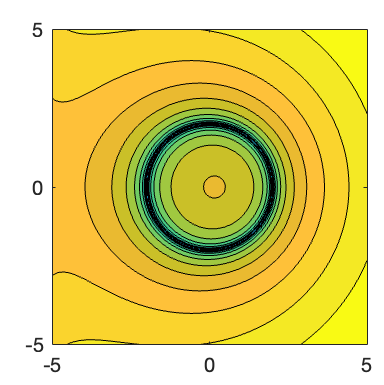}   &
          \includegraphics[height=2.62cm]{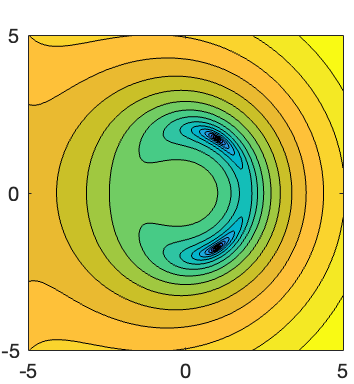}   
          \\
         (a) \small{Prior in $x$} & (b) \small{Posterior in $x$} & (c) \small{Prior in $\bfz$.} & (d) \small{Posterior in $\bfz$.}  
        \end{tabular}
    \caption{A non-convex prior with 2 local minima in $x$ is replaced with a learned quasi-convex prior in higher dimensions. Plots (c) and (d) are in log scale. }
    \label{fig:phi2d}
\end{figure}
Given data $b$, for the MAP estimator, one needs to minimize the log posterior distribution and solve the optimization problem
$$ \min_{x} {\frac 1{2\sigma}}(x-b)^2 + \gamma^{-1}(x-\mu)^2 (x+\mu)^2.  $$
This is clearly a non-convex problem. The negative log posterior is plotted in Figure~\ref{fig:phi2d}(b). This problem has two local minima. One close to $\mu$ and the other close to $-\mu$. A gradient descent type algorithm would therefore converge to one of the two minima, depending on the starting point. 

Now, consider the simple embedding
$x = \bfE \bfz$
where $\bfE = [1,0]$ and $\bfz = [z_1, z_2]^{\top}$.
Let us explore \Eqref{zmape} where we can learn or choose
a function $\phi(\bfz)$ as we please. One such function is
$$ \phi(\bfz) = (z_1^2 + z_2^2 -\mu^2)^2 = \left( (z_1 - \mu)(z_1 + \mu) + z_2^2 \right)^2 $$ 
that is plotted in Figure~\ref{fig:phi2d}(c).
Note that for $z_2=0$ the prior is reduced to the original prior.
The function in 2D has a clear path that connects both minima in 1D. The posterior surface of $\bfz$ given the data is plotted in Figure~\ref{fig:phi2d}(d). The function also has 2 minima however, they are benign since both of them have the same $z_1 = x$ and therefore
upon solving this problem with a gradient descent method, we can obtain the unique and correct minima.
%\textcolor{red}{Eran: The experiment is not clear. Is $\phi$ a learnt neural network? Can you say something about it? The architecture? E.g., $\phi$ is a n-layer neural network? Also - do you still consider the original prior or you replace it with the nn? Eq (9) does not have an additional prior so maybe it is worth to say something. BTW - it was eq (8) above but I assume it's a typo. }
\end{example}

\subsection{From Optimization to Network Architectures}

There are a number of options to use the optimization problem \eqref{zmape} and to generate 
a deep network. 
We materialize a simple descent of \Eqref{zmape} yielding a network of the form:
\begin{eqnarray}
\label{fdynamics}
{\bf OPTEnet}: \quad \bfz_{j+1} &=& \bfz_j - h_j \bfE^{\top}\bfA^{\top}(\bfA \bfE \bfz_j -  \bfb) - h_j \grad_{\bfz} \phi(\bfz_j ; \bftheta) 
\end{eqnarray}
where $h_j$ is a non-negative step size. 
We name this network OPTEnet since it evolves from an \textit{optimization} setting on the hidden Embedded variable $\bfz$. The method bears similarity to the method proposed by \cite{jin2017deep} for solving inverse problems. However, it differs from \cite{jin2017deep} in its utilization of a learnable embedding matrix $\bfE$. Note,
that the network has a single  learnable embedding matrix $\bfE$ and a single
potential layer $\phi(\cdot ; \bftheta)$, that is shared across all layers, parameterized by the weights $\bftheta$.

However, using a single embedding and shared parameters $\bftheta$ may yield a network with limited expressiveness \cite{ongie2020deep}, as it is shared across all layers. 
One way to increase the expressiveness of the network is to unroll the iteration \cite{ongie2020deep}, effectively changing
both $\bfE$ and $\bftheta$ with the iteration $j$, obtaining:
\begin{eqnarray}
\label{fdynamicsu}
{\bf EUnet}: \quad \bfz_{j+1} &=& \bfz_j - h_j \bfE_j^{\top}\bfA^{\top}(\bfA \bfE_j \bfz_j -  \bfb) - h_j \grad_{\bfz} \phi(\bfz_j ; \bftheta_j) 
\end{eqnarray}
We call this network EUnet as it includes an \textit{unrolling} and embedding steps.
This network
extends the idea of embedding beyond a single embedding matrix. 
While \Eqref{fdynamics} is a gradient flow step for the optimization problem in \Eqref{zmape}, its unrolled variant in \Eqref{fdynamicsu} is not.  %correspond to any global optimization problem.

Even though \Eqref{fdynamicsu} is not a solution to an optimization problem, using some mild assumption it does converge to
a continuous time-dependent process. Such a process  allows us to consistently change the number of layers (time steps), as has been shown by \cite{chen2018neural, ANODEV2, haber2019imexnet}.  
Assuming that $\bfE_j=\bfE(t_j)$ and $\phi(\bfz_j \bftheta_j) = \phi(\bfz(t_j) \bftheta(t_j))$ are piece-wise smooth in time, one way to interpret the method is as a time discretization, where the network layer index $j$ is interpreted as the $j$-th time step, of the dynamical system:
\begin{eqnarray}
    \label{ddynamics}
    {\frac {d\bfz}{dt}} &=& \bfE(t)^{\top}\bfA^{\top}(\bfb - \bfA \bfE(t) \bfz(t)) - \grad_{\bfz} \phi(\bfz(t) ; \bftheta(t), t) \quad t \in [0,T] \\
    \bfz(0) &=& \bfz_0 
\end{eqnarray}
These types of systems arise in instantaneous control when the 
objective function changes in time \cite{harrison1983instantaneous}. 

The two discrete models {\bf OPTnet} and
{\bf EUnet} have very different properties when 
the number of layers increases. In {\bf OPTnet}
the layers are added to obtain a steady state, however, in {\bf EUnet} we integrate over a fixed time $[0,T]$, and therefore when adding layers we need to decrease $h$ and maintain smoothness.

\textbf{Stochastic sampling.} Note that \Eqref{fdynamics}, and \Eqref{fdynamicsu}
 are deterministic discretizations of differential equations. However, similarly to \cite{martin2012stochastic, yang2022diffusion, croitoru2023diffusion}, it is possible to augment \Eqref{fdynamics}, and \Eqref{fdynamicsu} with a stochastic sampling mechanism as shown in \Eqref{langSolve}, yields a stochastic differential equation. This extension of our model is left for future work.

\section{Network architectures and training}
\label{sec:architectures}

\subsection{Architectures}

The systems in Equations \ref{fdynamics} and \ref{fdynamicsu} require the utilization of a gradient of the potential function $\phi$ with respect to $\bfz$. Note that in contrast to any other inversion technique known to us, our $\phi$ operates
on the higher dimension $\bfz$ rather than the lower dimension $\bfx$. Also, below we select the embedding matrices $\bfE$.

While it is possible to use many different architectures for the potential our basic form for the potential $\phi$ is a simple network of the form
\begin{eqnarray}
    \label{potential}
    \phi(\bfz, \bftheta, t) = \bfq^{\top}\sigma_I\left(\bfK_n \sigma_I(\bfK_{n-1} \sigma_I( ...\,  \bfK_2\sigma_I(\bfK_1\bfz + t\bfb_1) +\bfb_2) \ ... \right)
\end{eqnarray}
Here, the learnable parameters are $\bftheta =\{ \bfK_1,\ldots, \bfK_N, \bfb_1, \ldots, \bfb_N, \bfq\}$, where $\bfb_j$ are the time embedding vectors. The activation function
$\sigma_I$ are chosen as the integral of common activation 
functions. For example, in our experiments, we choose the integral of the hyperbolic tangent ($\rm{tanh}$) activation function:
$$ \sigma_I(t) = \int \tanh(t)\, dt = \log (\cosh (t)).$$

Now assume that we use a potential composed with single Layer in {\bf OPTnet} and we choose $\bfK = \gamma \bfI$ with $\gamma \gg 1$. We then have that
$$ \phi(\bfz) = \log (\cosh (\gamma \bfz)) \approx \gamma \|\bfz\|_1. $$
Thus, the method allows us to recreate the standard over-complete dictionary.
However, it is interesting to see that we are able to obtain more interesting surfaces.
Realizations for such surfaces are plotted in Figure~\ref{fig:surfs}.
\begin{figure}
    \centering
    \begin{tabular}{ccc}
    \includegraphics[width=4.5cm]{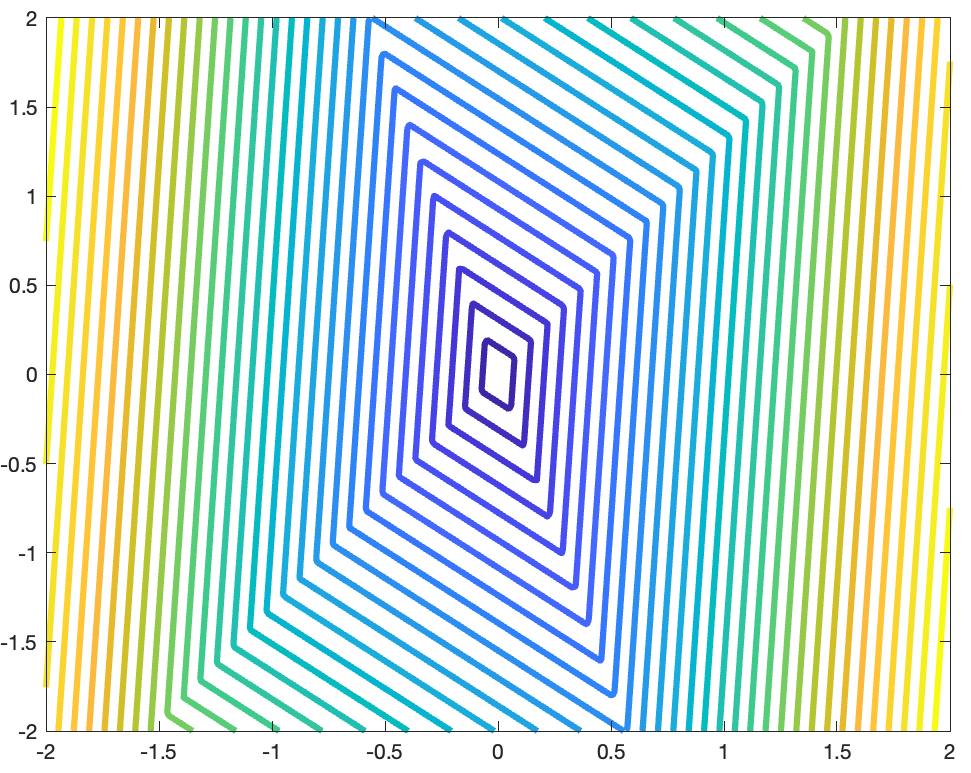} &
    \includegraphics[width=4.5cm]{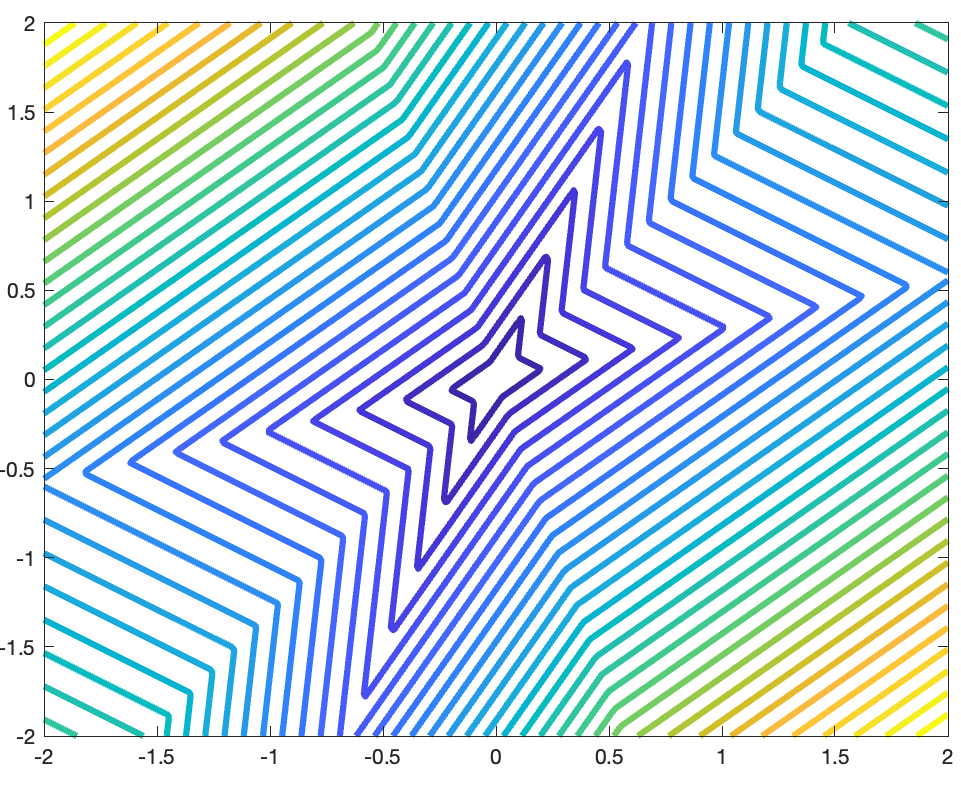} &
    \includegraphics[width=4.5cm]{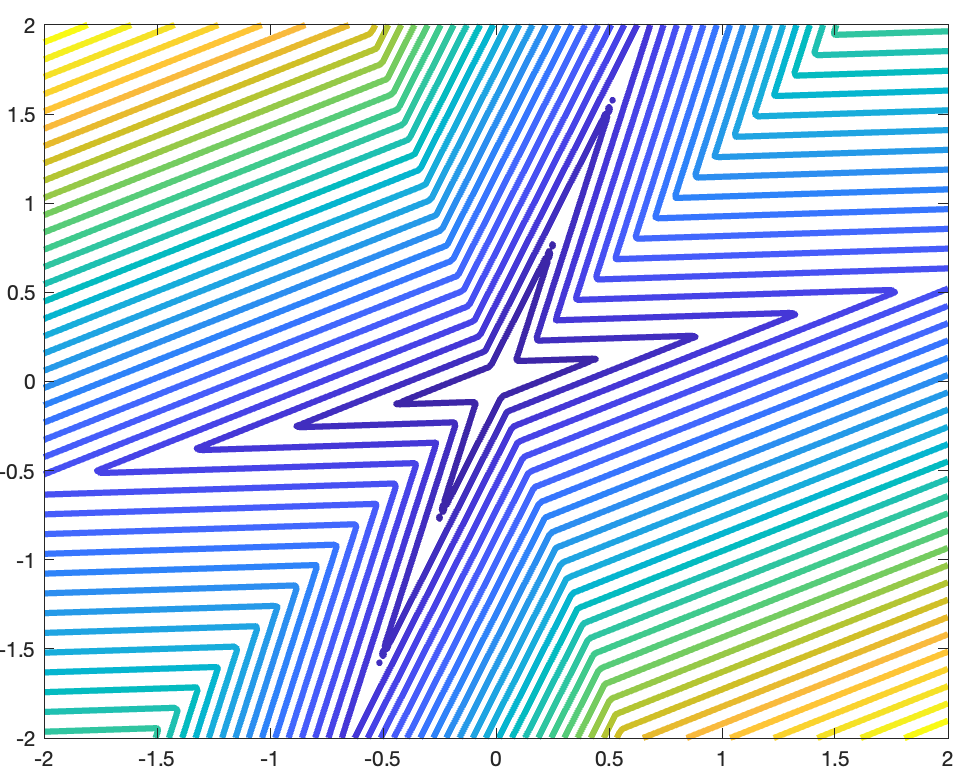} \\
    1 Layer & 2 Layer &  3 Layer 
    \end{tabular}
    \caption{The potential function $\phi$ with a different number of layers.}
    \label{fig:surfs}
\end{figure}

Note that upon differentiating with respect to $\bfz$ we obtain (neglecting the time embedding for brevity)
\begin{eqnarray}
\label{eq:network_equation}
\grad_{\bfz} \phi(\bfz, \bftheta) &=& 
 \bfK_1^{\top} {\rm diag}(\sigma(\bfK_1\bfz ) )\bfK_2^{\top} {\rm diag}(\sigma(\bfK_2\bfz) )\cdots\\ && \cdots\,
 \bfK_{n-1}^{\top} {\rm diag}(\sigma(\bfK_{n-1} \bfa_{n-1}))\bfK_n^{\top} {\rm diag}(\sigma\left(\bfK_n \bfa_n \right)) \bfq
 \\
 \bfa_j &=&  \sigma_I(\bfK_{j-1} \sigma_I( \cdots\,  \bfK_2\sigma_I(\bfK_1\bfz) 
\end{eqnarray}

In Equation \ref{eq:network_equation}, $\bfK_i$ are trainable convolution filter matrices.
%For unstructured data,  we choose $\bfK_i$  as dense
%layers and for image data, we choose them as $s\times s$ convolutions. Graph layers can be accommodated by the graph Laplacian followed by a 1D convolution.

In order to learn the embedding matrix $\bfE$
we use a simple network that takes $t$ and maps it
into a kernel using a 2-layer network, that is
\begin{eqnarray}
    \label{Et}
    \bfE(t) =  \bfW_2 \sigma(t \bfW_1 + \bfc)
\end{eqnarray}
Here $\bfW_1, \bfW_2$, and  $\bfc$ are trainable weight matrices that determine the embedding as a function of time. 

Note that if in \Eqref{potential} we reduce $\bfE$ to the identity matrix and choose $\grad \phi$ to be a Unet then we obtain an architecture similar to the one proposed in \cite{adler2017solving}. Thus, our architecture allows for the combination and extension of techniques that range from a classical $\ell_1$ with dictionaries to deep networks with a learned regularization.

\subsection{Training }

Throughout the experiments in Section \ref{sec4}, we aim to train the 'Proximal', OPTEnet, and EUnet to reduce the residual, i.e., to decrease the data fit error. Denoting the noisy data by $\bfb$, and the true solution as $\bfx$, we aim to minimize the empirical risk:
\begin{equation}
    \label{eq:loss}
    \mathcal{L} = {\mathbb{E}}\| f(\bfb) - \bfx \|_2,
\end{equation}
where $f(\cdot)$ is the respective neural network. 

We train each network for 300 epochs, and determine the hyperparameters by a grid search. Our hyperparameters are the learning rate $\rm{lr} \in \{1e-2,1e-3,1e-4,1e-5\}$, the weight decay $\rm{wd} \in \{1e-3,1e-4,1e-5,1e-6,0\}$ and batch size $\rm{bs} \in \{32,64,128,256,512\}$.

To train the diffusion model \cite{chung2022diffusion}, we used the code provided by the authors. Note that as per the work suggested in \cite{chung2022diffusion} a major difference between the diffusion model and 'Proximal' \cite{mardani2018neural} and our OPTEnet and EUnet, is that the diffusion model is not trained in an end-to-end manner for the specific inverse problem we aim to solve. 
%However, the rest of the methods are trained per inverse problem of interest. This difference stems from the design and concept of diffusion models. 
As can be seen in our experiments, the approach of learning a 'global' prior via a diffusion model, may lead to sub-par results on hard inverse problems, where the forward problem is not close to denosing.

\iffalse
\textbf{Training the networks.}
Our training is done in a standard end-to-end manner, similar to the training proposed in \cite{mardani2018neural}. We pass the image data $\bfx$ through the forward problem $\bfA$ and add noise to generate the data $\bfb$. We then use either \Eqref{fdynamics} or \Eqref{fdynamicsu} 
and compare the recovered solution to the true one. We provide additional details in \textcolor{red}{Appendix \ref{appendixC}}.
\fi

\section{Numerical Experiments}
\label{sec4}

As we see next, the methods proposed in this work perform
well for classical problems such as tomography and image deblurring, similarly to other existing methods.  More importantly, our proposed method significantly outperforms existing methods
for highly ill-posed problems such as the inversion of magnetic data.
In our experiments, we use we use the MNIST \cite{mnistlecun2009}, and STL10 \cite{coates2011analysis} datasets.

We experiment with the two variants of the embedded solution proposed in this paper, namely ${\bf OPTnet}$ and ${\bf EUnet}$. As a comparison, we consider a diffusion
model applied to an inverse problem as proposed by \cite{chung2022diffusion} that is denoted by 'Diffusion' throughout this section, and the Unrolled proximal iteration proposed by \cite{mardani2018neural}, similarly denoted by 'Proximal'.
%We discuss training and evaluation details in Appendix \ref{appendixC}.

% Before we conduct testing on realistic
% problems, we return to the duathlon problem that offers greater interpretability and allows us to better visualize the results.

\subsection{The dualthlon problem}

In Section~\ref{sec2} we have seen how diffusion models
converge to two different local minima in Figure \ref{fig:figTri}(b) - one which is local
and is unrealistic and one is global and is therefore
desired. 
As can be seen in Figure~\ref{fig:figTri}(c), using the same data with our $\bf{OPTEnet}$,  we obtain a sampling of the correct minima.

We now do a similar test on a large number of points.
We first train the networks 
$\bf{OPTEnet}$ and $\bf{EUnet}$ to solve the problem. The embedding dimension of $\bfz$ here is $128$. Thus, it is 64 times larger than the size of the input $\bfx$.
We also train a network based on proximal methods \cite{mardani2018neural} where no embedding is used for comparison.

We generate our training and validation data by sampling 10,000 data points chosen from $3$ Gaussians with means $\mu_i, i=1,...,3$,, that is, $\bfx \sim N(\mu_i, \bfI)$.
For each sampled point $\bfx = (x_1, x_2)^{\top}$ we generate the measured data by the summation $x_1+x_2$, and add $1\%$ noise, attempting to recover $\bfx = (x_1,x_2)^{\top}$.

% The convergence curves on the validation set are plotted in Figure~\ref{fig:converge}.
% \begin{figure}[h]
% \centering
%     \includegraphics[width=0.6\textwidth]{}
%     \caption{Convergence curve of the different methods for the dualthlon problem. The MSE loss is plotted as a function of the number of epochs for the different methods. }
%     \label{fig:converge}
% \end{figure}
A table with the mean-squared-error (MSE) for each method, on the final data set
is presented in Table~\ref{tabTri}. 
We find that the proposed 
$\bf{OPTEnet}$ and $\bf{EUnet}$ architectures, that use embedding of the solution in a larger space, perform significantly better than the inversion based on the proximal method --  that solves the problem in the original space. Among the two variants, the Unrolled method EUnet, performed significantly better compared with the optimization-based network OPTEnet. 
\begin{table}[h]
\centering
\begin{tabular}{ccccc} \toprule
{\bf Method} & Proximal    &  OPTEnet  & EUnet   \\ \hline
{\bf MSE} & $3.5 \times 10^{-1}$  & $8.2 \times 10^{-2}$ &
 $4.1 \times 10^{-2}$ \\
 \bottomrule
\end{tabular}
\caption{Mean-Square-Error on validation data for the duathlon problem \label{tabTri}}
\end{table}

\subsection{Image deblurring}

Image deblurring is a common inverse problem where the forward problem is given by the integral (see \cite{nagyHansenBook}
\begin{eqnarray}
    \label{blurring}
    \bfb(\bfr) = \int_{\Omega} K(\bfr-\bfr') \bfx(r') d\bfr'
\end{eqnarray}
where $K(\bfr-\bfr')$ is a point spread function (PSF) with the form $K(\bfr-\bfr') = \exp\left(-s^{-1}\|\bfr -\bfr'\|\right)$. Here $s$ plays the role of smoothing. For a small $s$ the data is very similar to the original image and the problem is almost well posed, while for a large $s$ the data is highly smoothed and the problem is highly ill-posed.
We use the algorithm presented in \cite{nagyHansenBook} and discretize the integral on a $96 \times 96$ grid. 
We then train our network as well as the network \cite{adler2017solving} on image deblurring problems where 
the blurring kernel changes from light to heavy blurring. We also use a trained diffusion model as proposed in \cite{song2022solving} on recovering the original image.
The results are summarized in Table~\ref{tab2}.
\begin{table}[]
    \centering
    \begin{tabular}{c|c|c|c}
        \toprule
      Blurring Kernel size $s$   & Diffusion & Proximal & EUnet (Ours) \\ \midrule
       1  &   4.3e-3 & 5.6e-3   &    1.3e-3\\ 
       3  &  4.3e-2 & 3.0e-2    &2.3e-2\\ 
       5  &  1.9e-1 & 4.7e-2   & 4.2e-2   \\ 
       7  & 5.0e-1  &   6.3e-2   &  5.7e-2         \\         
       9  & 9.5e-1  &  8.9e-2   &  7.1e-2  
       \\ \bottomrule
    \end{tabular}
    \caption{Recovery loss (MSE) of the  STL-10 test set, of different methods and different blurring kernels sizes $s$ .}
    \label{tab2}
\end{table}

 \begin{figure}[t]
\begin{center}
\resizebox{1\linewidth}{!}{
\begin{tabular}{c|c|c|c|c|c}
\centering \rotatebox{90}{$s=1$} & 
\includegraphics[width=2.25cm,valign=c]{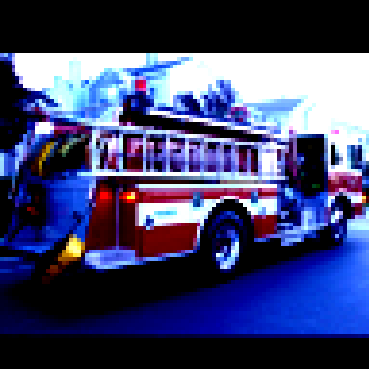}  &
\includegraphics[width=2.25cm,valign=c]{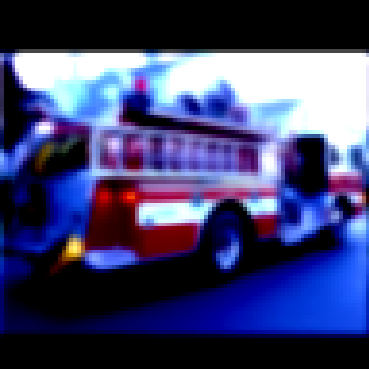} &
\includegraphics[width=2.3cm,valign=c]{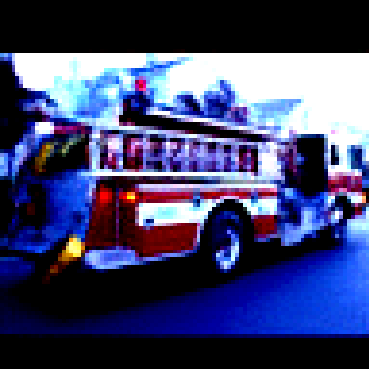}  &
\includegraphics[width=2.20cm,valign=c]{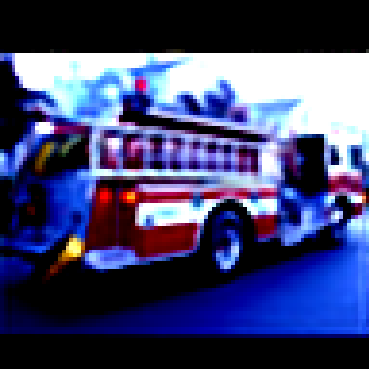} &
\includegraphics[width=2.25cm,valign=c]{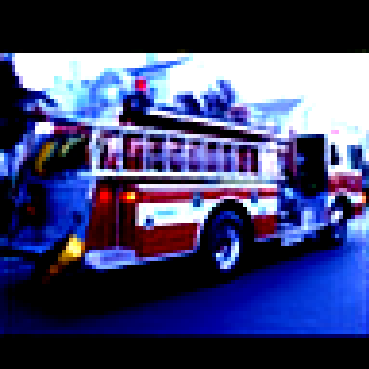}
\\ \rotatebox[origin=t]{90}{$s=3$} & \includegraphics[width=2.25cm,valign=c]{images/s1_img5_gt.png} &
\includegraphics[width=2.25cm,valign=c]{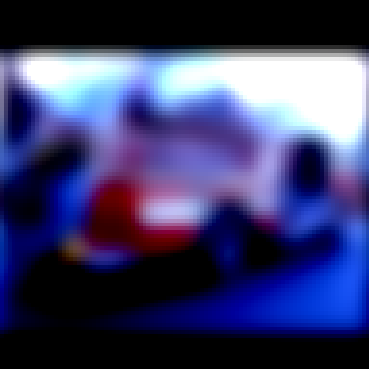} &
\includegraphics[width=2.25cm,valign=c]{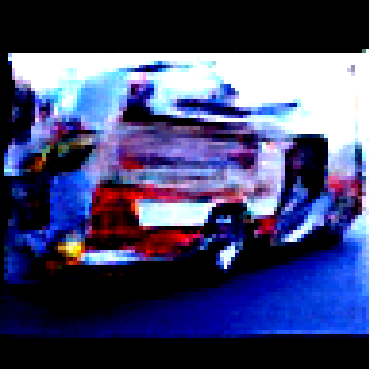} &
\includegraphics[width=2.25cm,valign=c]{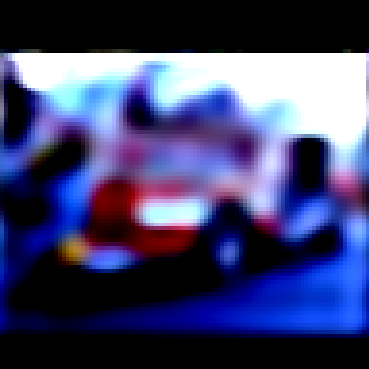} &
\includegraphics[width=2.25cm,valign=c]{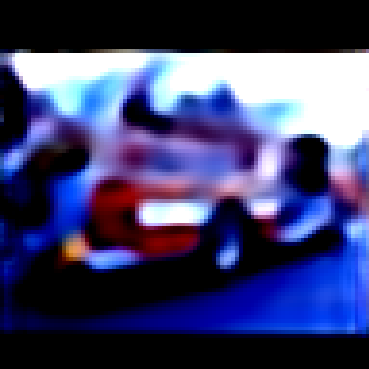}
\\ \rotatebox{90}{$s=5$} &  \includegraphics[width=2.25cm,valign=c]{images/s1_img5_gt.png}  &
\includegraphics[width=2.25cm,valign=c]{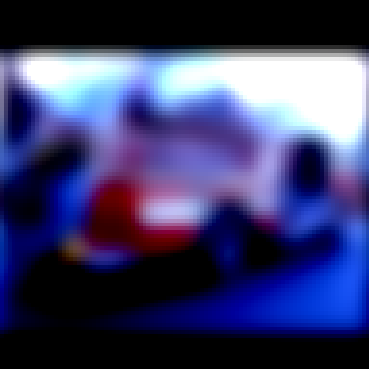} &
\includegraphics[width=2.25cm,valign=c]{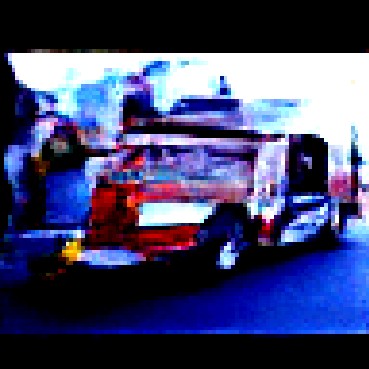} &
\includegraphics[width=2.25cm,valign=c]{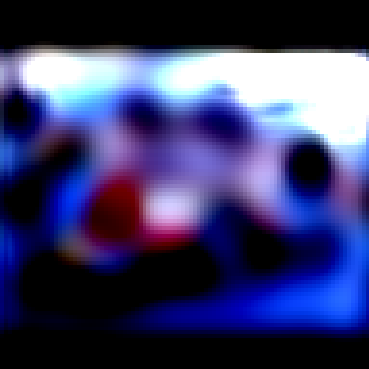} &
\includegraphics[width=2.25cm,valign=c]{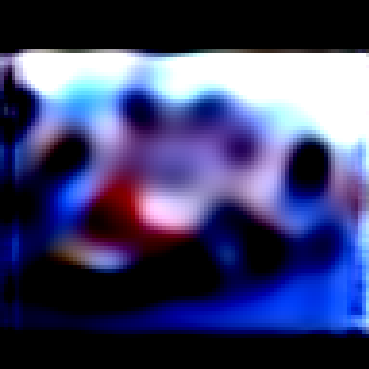}
\\ \rotatebox{90}{$s=7$}  &  \includegraphics[width=2.25cm,valign=c]{images/s1_img5_gt.png}  &
\includegraphics[width=2.25cm,valign=c]{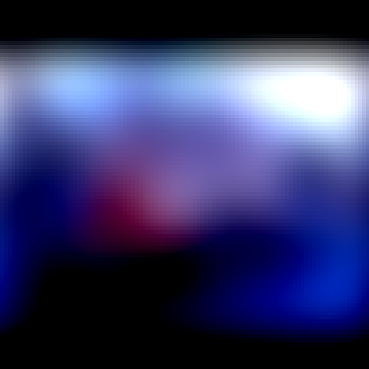} &
\includegraphics[width=2.25cm,valign=c]{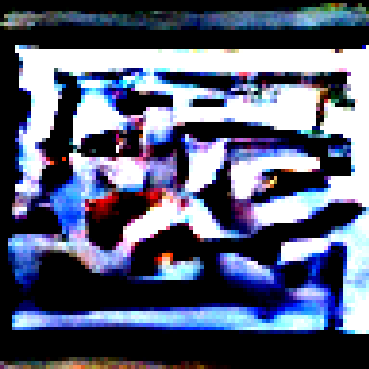} &
\includegraphics[width=2.25cm,valign=c]{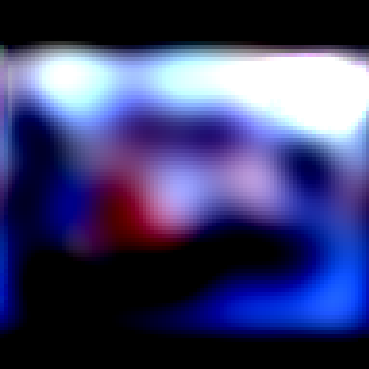} &
\includegraphics[width=2.25cm,valign=c]{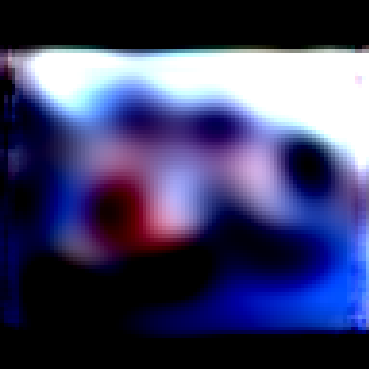}
\\ \rotatebox{90}{$s=9$} & \includegraphics[width=2.25cm,valign=c]{images/s1_img5_gt.png}  &
\includegraphics[width=2.25cm,valign=c]{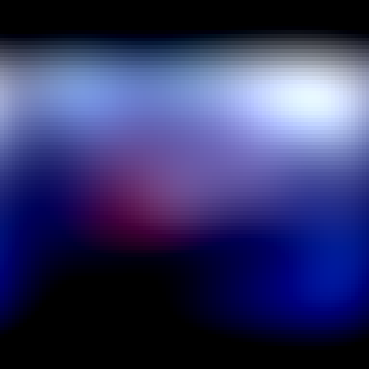} &
\includegraphics[width=2.25cm,valign=c]{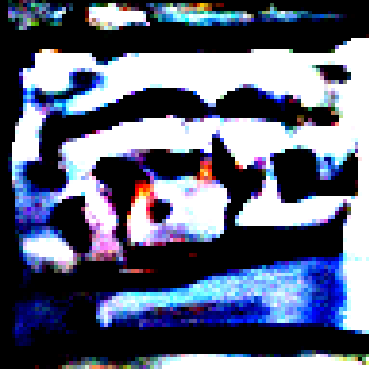} &
\includegraphics[width=2.25cm,valign=c]{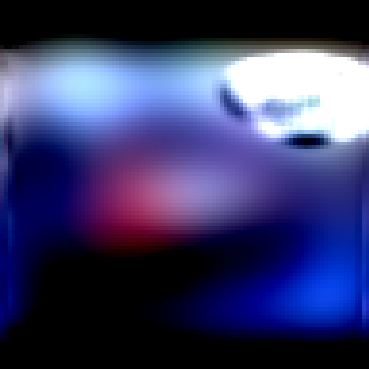} &
\includegraphics[width=2.25cm,valign=c]{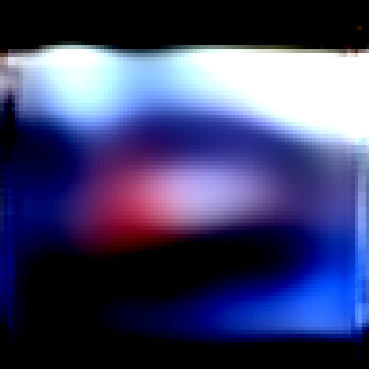}
\\ 
& (a)  & (b)  & (c) & (d) & (e) 
\end{tabular}}
\end{center}
\caption{An example of the recovery of deblurred images from the STL-10 data set. 
(a) Ground truth (same for all rows)
(b) Observed data,   
(c) Diffusion,  
(d) Proximal, (e) EUnet. Table \ref{tab2} reports numerical recovery results. Additional examples are provided in Appendix \ref{appendix:examples}.}
\label{fig:recovery_stl10}
\end{figure}

It is no surprise that our EUnet performs better than the Proximal method, because our method generalizes it. When comparing with diffusion models our method gives slightly worse results for problems where the blurring is small. Nonetheless, for problems where the blurring is significant, our method easily outperforms diffusion models. 

\subsection{Magnetics}

Observing that the real advantage of embedding is obtained for highly ill-posed problems we now turn our attention to such a problem.
Magnetics is a classical inverse problem in geophysical exploration \cite{parker}
and is commonly used to study the earth's interior and other planets \cite{mittelholz2022martian}.
The forward problem is given by Fredholm integral equation of the first kind
\begin{eqnarray}
    \label{magnetics}
    \bfb(\bfr') = \int_{\Omega} \left(\bfn_I \cdot \grad \grad (| \bfr - \bfr'|^{-1}) \cdot \bfn_J \right) \bfx(\bfr) d \bfr
\end{eqnarray}
where $\bfn_I$ and $\bfn_J$ are known direction vectors
and $\bfr$ and $\bfr'$ are location vectors. The model $\bfx(\bfr)$ is the magnetic susceptibility to be computed from the data $\bfb$. 
Typically, the data is measured on the top of the earth and one wishes to estimate the solution everywhere.
The magnetic problem is clearly highly ill-posed as we require the recovery of 2D solution from 1D data.

Upon discretization of the integral \Eqref{magnetics} using the midpoint method, we obtain a linear system. A sample from the images and the data which is a 1D vector that corresponds to the forward problem is presented in Figure~\ref{fig:model+data}.
We use the MNIST data set to train the system using our method as well as diffusion as proposed in \cite{chung2022diffusion}. 
The true images, images that are generated using the unrolled network EUnet, and images generated by the Diffusion model are presented in Figure~\ref{fig:model+data}.
The images clearly demonstrate that while the diffusion model fails to converge to an acceptable result, our EUnet yields plausible solutions.
\begin{figure}[h]
    \centering
    \begin{tabular}{cccccccccc}
     \includegraphics[width=1cm, height=1.2cm]{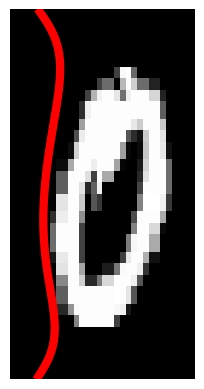}    & 
     \includegraphics[width=1cm, height=1.2cm]{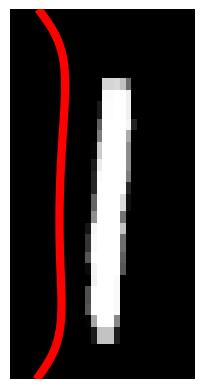}    & 
     \includegraphics[width=1cm, height=1.2cm]{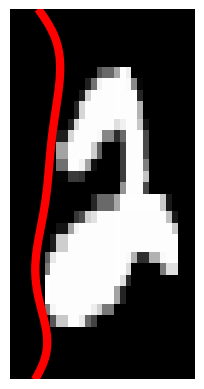}    & 
     \includegraphics[width=1cm, height=1.2cm]{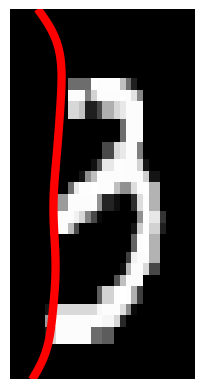}    & 
     \includegraphics[width=1cm, height=1.2cm]{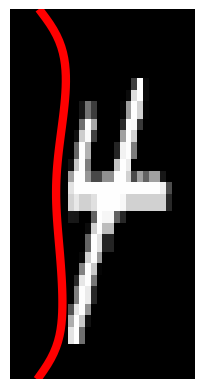}    &
     \includegraphics[width=1cm, height=1.2cm]{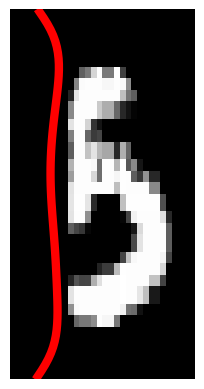}    & 
     \includegraphics[width=1cm, height=1.2cm]{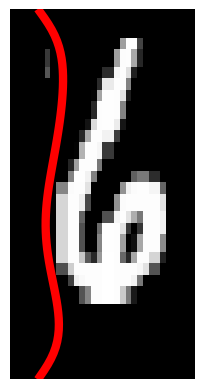}    & 
     \includegraphics[width=1cm, height=1.2cm]{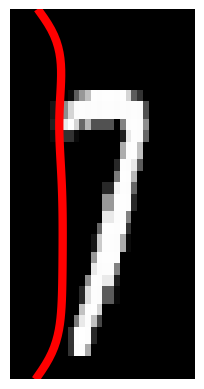}    & 
     \includegraphics[width=1cm, height=1.2cm]{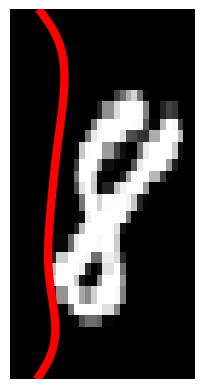}    & 
     \includegraphics[width=1cm, height=1.2cm]{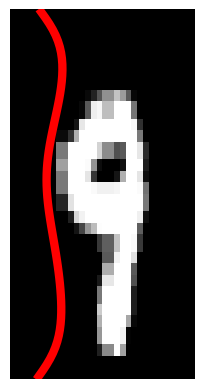}   
    \\
    \end{tabular}    
    \caption{Models and corresponding data (red line) for the inverse magnetics experiment.}
    \label{fig:model+data}
\end{figure}
\begin{figure}[h]
    \centering
    \resizebox{\linewidth}{!}{
    \begin{tabular}{cccc}
     \includegraphics[width=3.2cm, height=3.2cm]{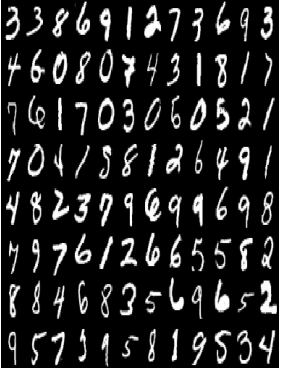}    & 
     \includegraphics[width=3.2cm, height=3.2cm]{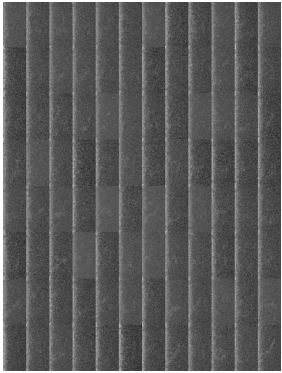}     &
     \includegraphics[width=3.2cm, height=3.2cm]{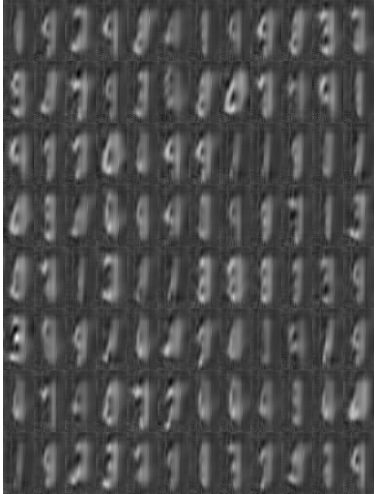}  
     & \includegraphics[width=3.2cm, height=3.2cm]{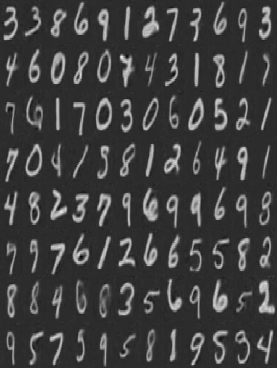}   
     \\
     True & Diffusion (MSE=9.1e-1) & Proximal (MSE=1.8e-1) & EUnet (MSE=6.2e-2)
    \end{tabular}}
    \caption{Recovery of MNIST images from magnetic data using Diffusion, Proximal, and EUnet.}
    \label{fig:enter-label}
\end{figure}
For the magnetic problem which is highly ill-posed, we observe that embedding is a key for a successful recovery of the solution. In Appendix \ref{appendix:examples}, we show the convergence of each of the methods, showing that our EUnet also offers faster convergence compared with proximal methods, as well as our optimization problem-based OPTEnet.

\section{Summary and conclusions}
\label{sec5}

In this paper, we have introduced a new method for inverse problems that incorporates learning an over-complete embedding of the solution as well as a regularization that acts on the hidden variables. Our method can be used either as an optimization problem or as an unrolled version. Our methods extend techniques that are based on over-complete dictionaries
such as basis pursuit by allowing to tailor a data-driven regularization for the basis and extend the regularization learning by changing the basis. %\sout{Both basis pursuit and learning priors (e.g. diffusion) can be thought of as a special case of our method.} \textcolor{red}{ERAN: I suggest removing the last sentence. How are diffusion models a special case? }

We provide theoretical justification to the method and conduct experiments with a few model problems to demonstrate that indeed, there is merit in jointly learning the embedding as well as the regularization that acts on it.

Comparing our network to existing methods we observe that it outperforms other end-to-end techniques that do not embed the solution in high dimension. Moreover, our method significantly outperforms diffusion-based methods.

We believe that the main reason is that diffusion-based methods are not trained end-to-end and therefore may sample low-energy regions of the posterior. Incorporating an embedding in diffusion models and training them end-to-end is an open question that we believe can improve their performance for highly ill-posed problems.

\appendix

\section{Additional visualizations and convergence plot}
\label{appendix:examples}
\textbf{Additional deblurring visualizations.} now provide additional visualizations of the recovery quality of our EUnet compared with Proximal methods \cite{mardani2018neural} and Diffusion models \cite{chung2022diffusion}. As is evident from Table \ref{tab2}, our EUnet offers better recovery (lower MSE), and this improvement is reflected in the provided examples below.

 \begin{figure}[h]
\begin{center}
\begin{tabular}{c|c|c|c|c|c}
\centering \rotatebox{90}{$s=1$} & 
\includegraphics[width=2.25cm,valign=c]{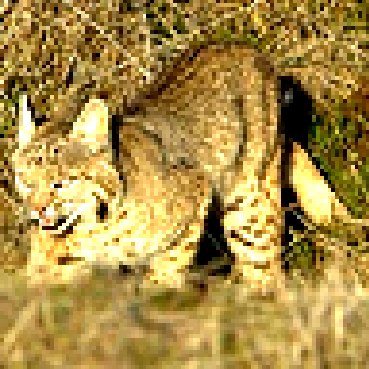}  &
\includegraphics[width=2.25cm,valign=c]{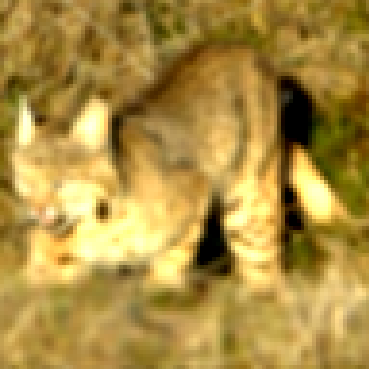} &
\includegraphics[width=2.3cm,valign=c]{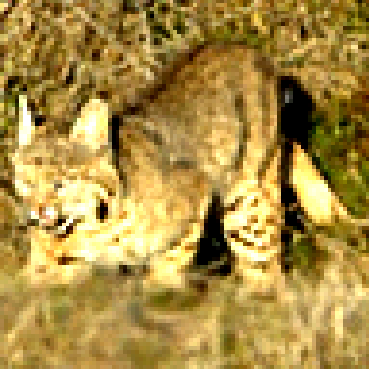}  &
\includegraphics[width=2.20cm,valign=c]{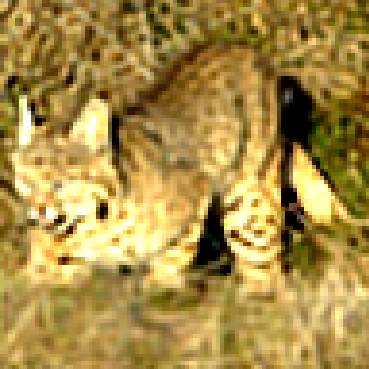} &
\includegraphics[width=2.25cm,valign=c]{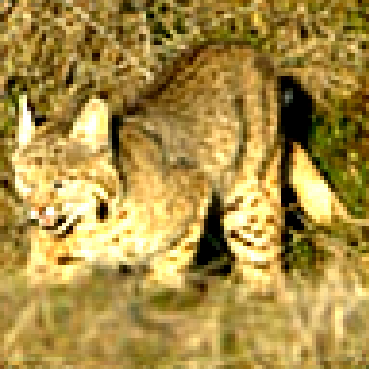}
\\ \rotatebox[origin=t]{90}{$s=3$} & \includegraphics[width=2.25cm,valign=c]{images/s1_img4_gt.png} &
\includegraphics[width=2.25cm,valign=c]{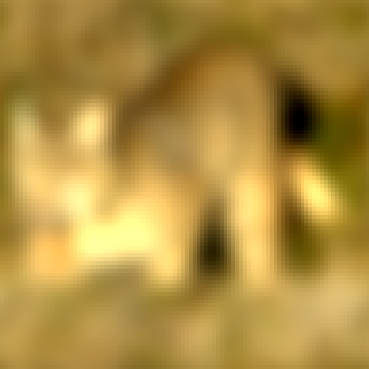} &
\includegraphics[width=2.25cm,valign=c]{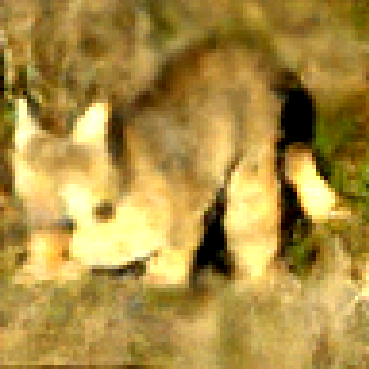} &
\includegraphics[width=2.25cm,valign=c]{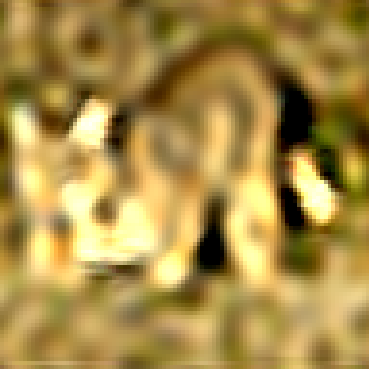} &
\includegraphics[width=2.25cm,valign=c]{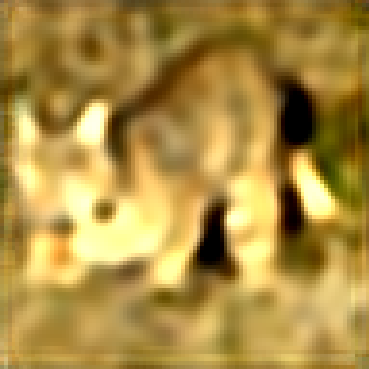}
\\ \rotatebox{90}{$s=5$} &  \includegraphics[width=2.25cm,valign=c]{images/s1_img4_gt.png}  &
\includegraphics[width=2.25cm,valign=c]{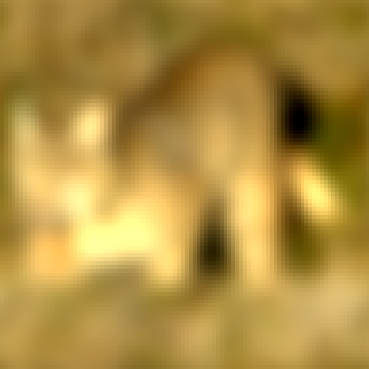} &
\includegraphics[width=2.25cm,valign=c]{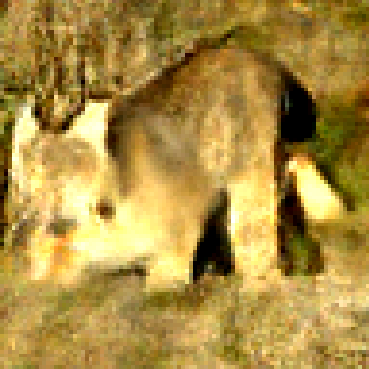} &
\includegraphics[width=2.25cm,valign=c]{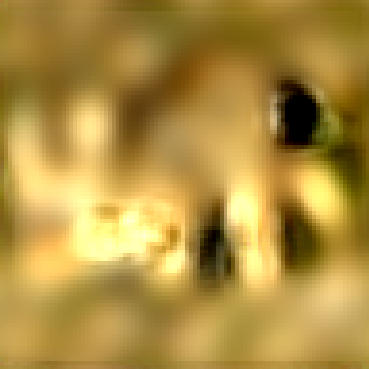} &
\includegraphics[width=2.25cm,valign=c]{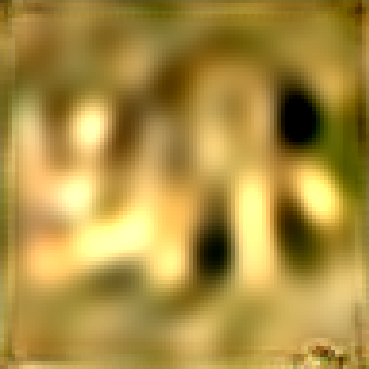}
\\ \rotatebox{90}{$s=7$}  &  \includegraphics[width=2.25cm,valign=c]{images/s1_img4_gt.png}  &
\includegraphics[width=2.25cm,valign=c]{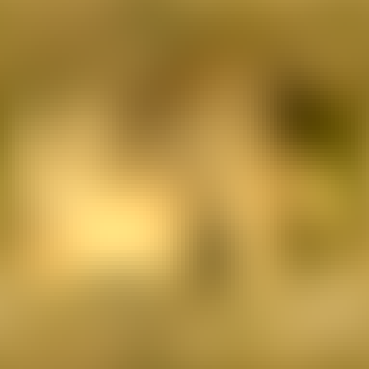} &
\includegraphics[width=2.25cm,valign=c]{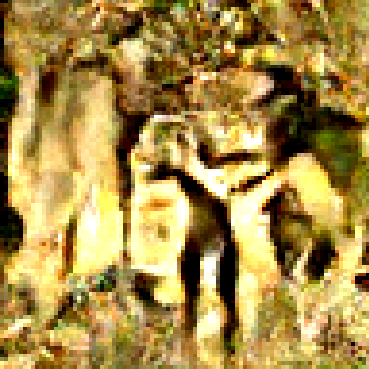} &
\includegraphics[width=2.25cm,valign=c]{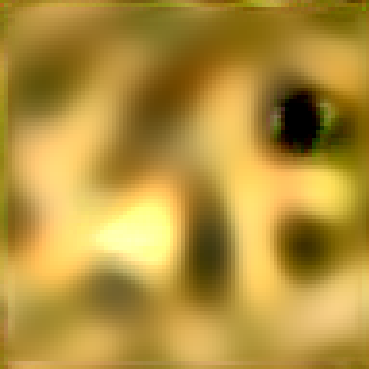} &
\includegraphics[width=2.25cm,valign=c]{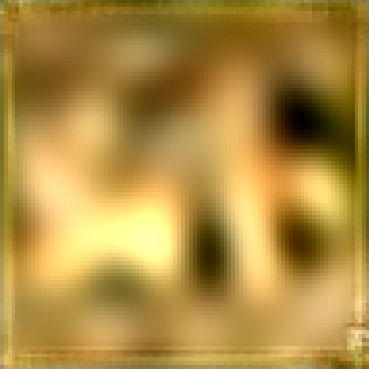}
\\ \rotatebox{90}{$s=9$} & \includegraphics[width=2.25cm,valign=c]{images/s1_img4_gt.png}  &
\includegraphics[width=2.25cm,valign=c]{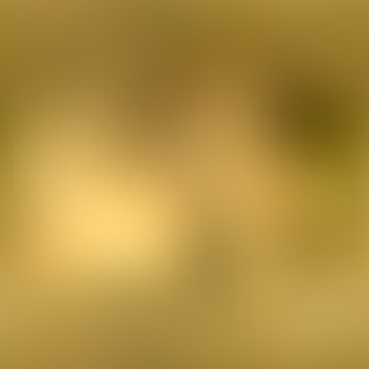} &
\includegraphics[width=2.25cm,valign=c]{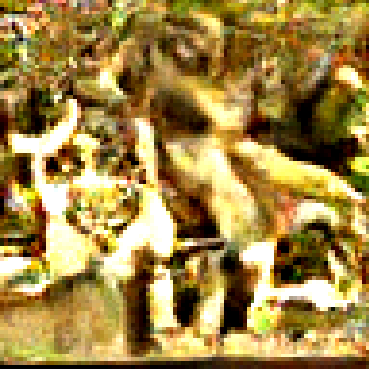} &
\includegraphics[width=2.25cm,valign=c]{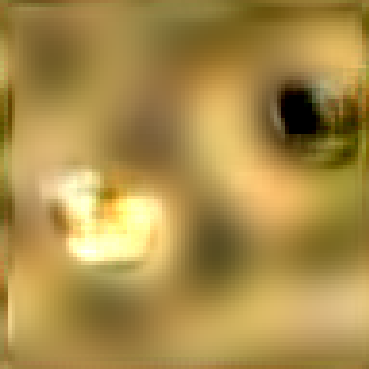} &
\includegraphics[width=2.25cm,valign=c]{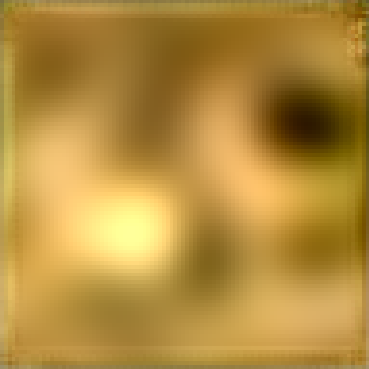}
\\ 
& (a)  & (b)  & (c) & (d) & (e) 
\end{tabular}
\end{center}
\caption{An additional example of the recovery of deblurred images from the STL-10 data set. 
(a) Ground truth (same for all rows)
(b) Observed data,   
(c) Diffusion,  
(d) Proximal, (e) EUnet. Table \ref{tab2} reports numerical recovery results.}
\label{fig:additional4}
\end{figure}

 \begin{figure}[t]
\begin{center}
\begin{tabular}{c|c|c|c|c|c}
\centering \rotatebox{90}{$s=1$} & 
\includegraphics[width=2.25cm,valign=c]{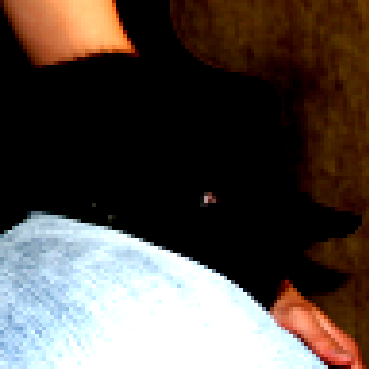}  &
\includegraphics[width=2.25cm,valign=c]{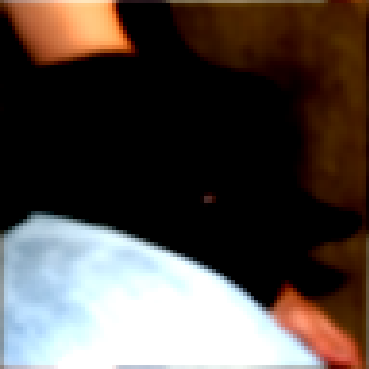} &
\includegraphics[width=2.3cm,valign=c]{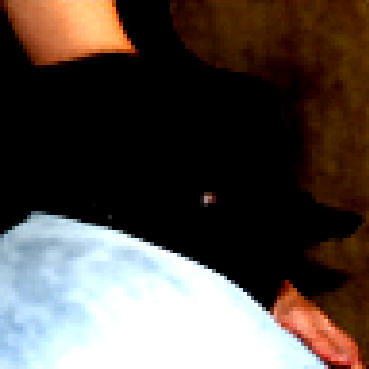}  &
\includegraphics[width=2.20cm,valign=c]{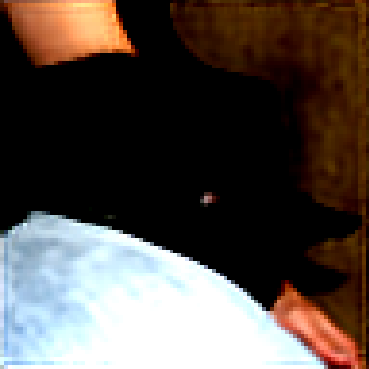} &
\includegraphics[width=2.25cm,valign=c]{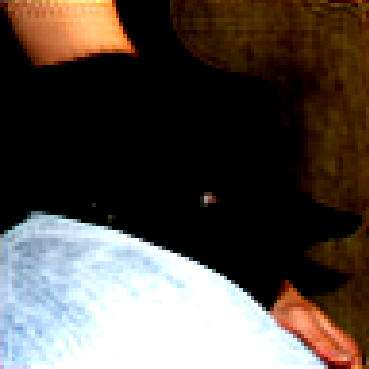}
\\ \rotatebox[origin=t]{90}{$s=3$} & \includegraphics[width=2.25cm,valign=c]{images/s1_img8_gt.png} &
\includegraphics[width=2.25cm,valign=c]{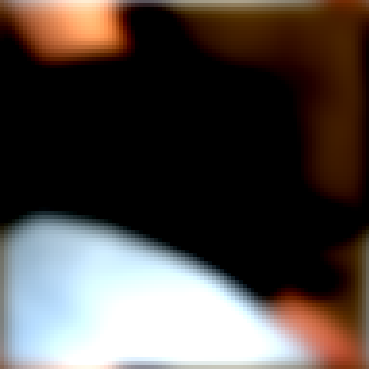} &
\includegraphics[width=2.25cm,valign=c]{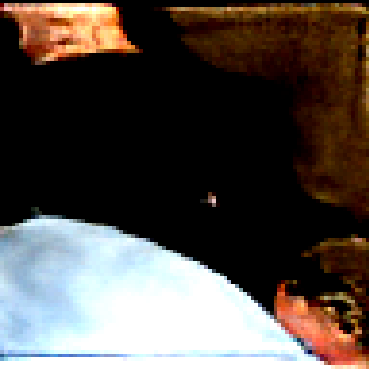} &
\includegraphics[width=2.25cm,valign=c]{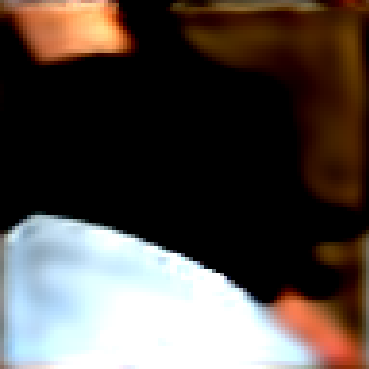} &
\includegraphics[width=2.25cm,valign=c]{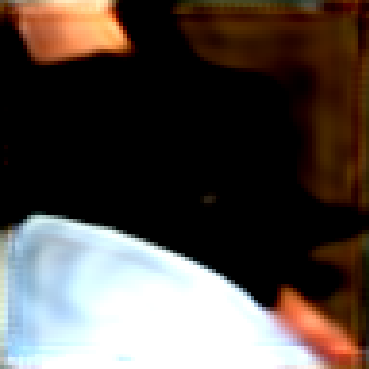}
\\ \rotatebox{90}{$s=5$} &  \includegraphics[width=2.25cm,valign=c]{images/s1_img8_gt.png}  &
\includegraphics[width=2.25cm,valign=c]{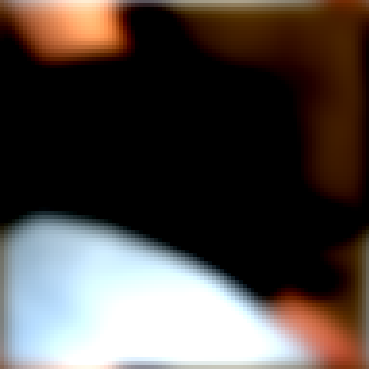} &
\includegraphics[width=2.25cm,valign=c]{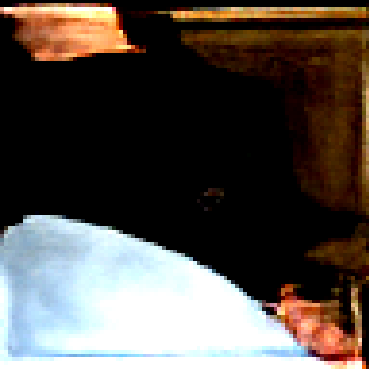} &
\includegraphics[width=2.25cm,valign=c]{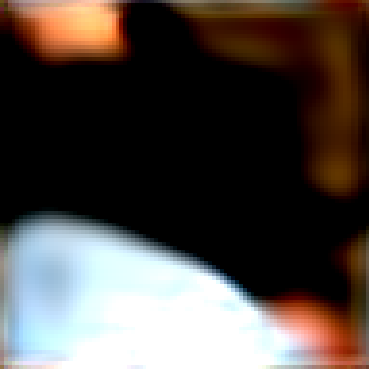} &
\includegraphics[width=2.25cm,valign=c]{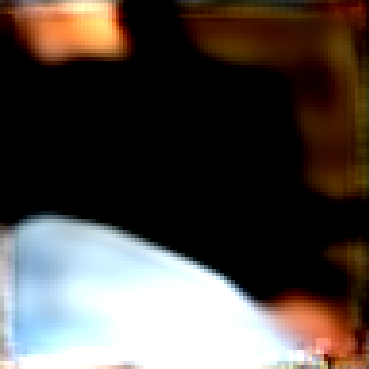}
\\ \rotatebox{90}{$s=7$}  &  \includegraphics[width=2.25cm,valign=c]{images/s1_img8_gt.png}  &
\includegraphics[width=2.25cm,valign=c]{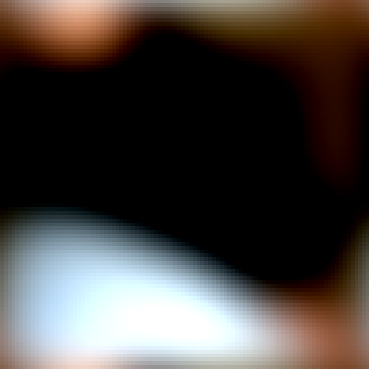} &
\includegraphics[width=2.25cm,valign=c]{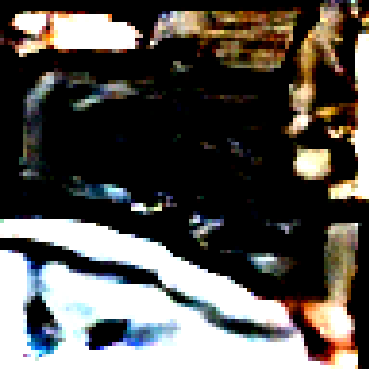} &
\includegraphics[width=2.25cm,valign=c]{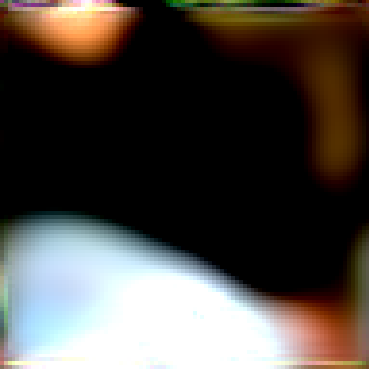} &
\includegraphics[width=2.25cm,valign=c]{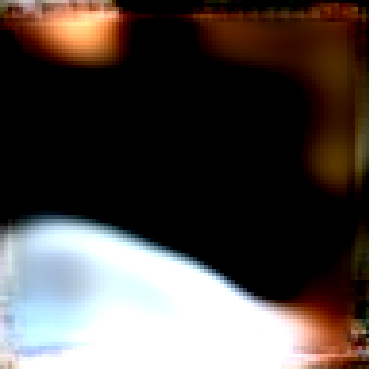}
\\ \rotatebox{90}{$s=9$} & \includegraphics[width=2.25cm,valign=c]{images/s1_img8_gt.png}  &
\includegraphics[width=2.25cm,valign=c]{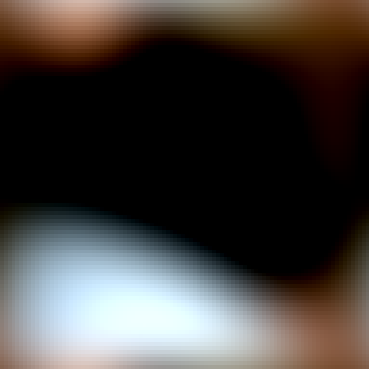} &
\includegraphics[width=2.25cm,valign=c]{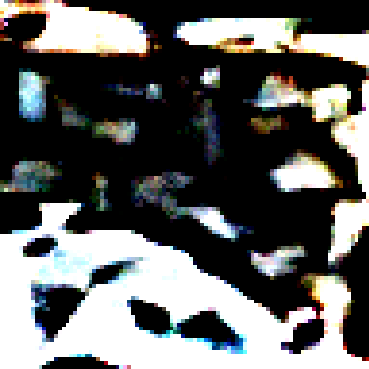} &
\includegraphics[width=2.25cm,valign=c]{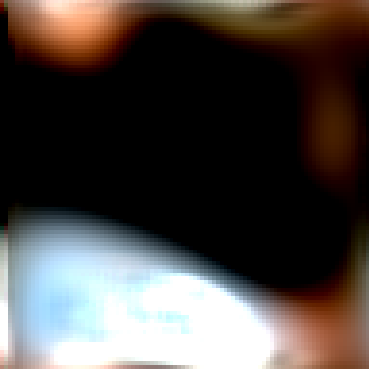} &
\includegraphics[width=2.25cm,valign=c]{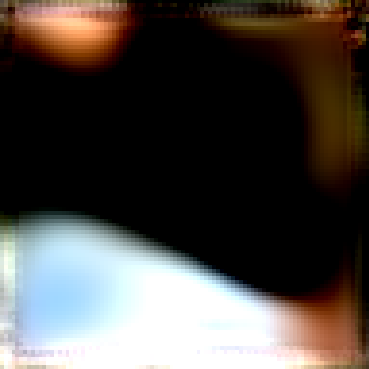}
\\ 
& (a)  & (b)  & (c) & (d) & (e) 
\end{tabular}
\end{center}
\caption{Additional example of the recovery of deblurred images from the STL-10 data set. 
(a) Ground truth (same for all rows)
(b) Observed data,   
(c) Diffusion,  
(d) Proximal, (e) EUnet. Table \ref{tab2} reports numerical recovery results.}
\label{fig:additional5}
\end{figure}

 \begin{figure}[t]
\begin{center}
\begin{tabular}{c|c|c|c|c|c}
\centering \rotatebox{90}{$s=1$} & 
\includegraphics[width=2.25cm,valign=c]{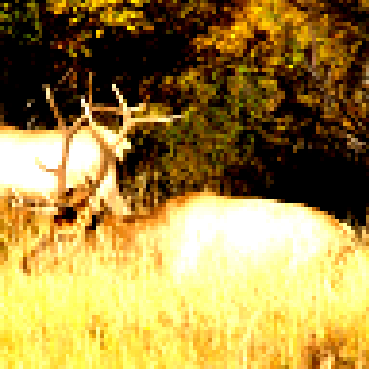}  &
\includegraphics[width=2.25cm,valign=c]{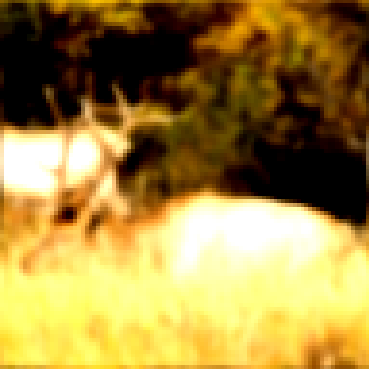} &
\includegraphics[width=2.3cm,valign=c]{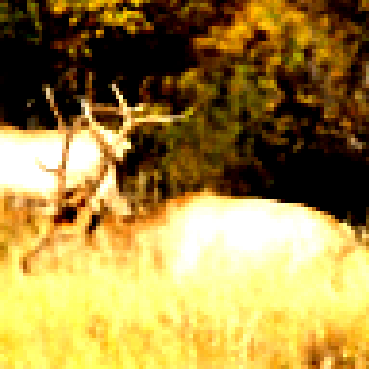}  &
\includegraphics[width=2.20cm,valign=c]{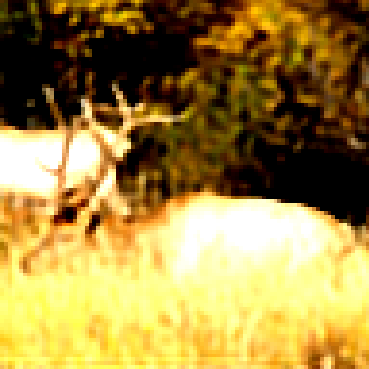} &
\includegraphics[width=2.25cm,valign=c]{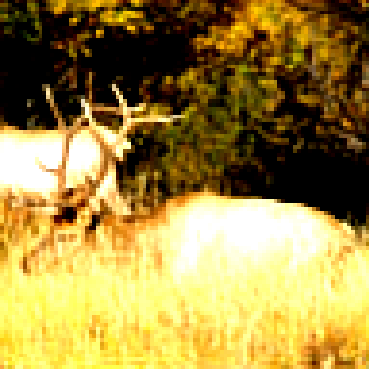}
\\ \rotatebox[origin=t]{90}{$s=3$} & \includegraphics[width=2.25cm,valign=c]{images/s1_img7_gt.png} &
\includegraphics[width=2.25cm,valign=c]{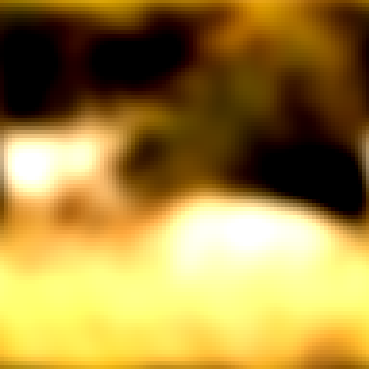} &
\includegraphics[width=2.25cm,valign=c]{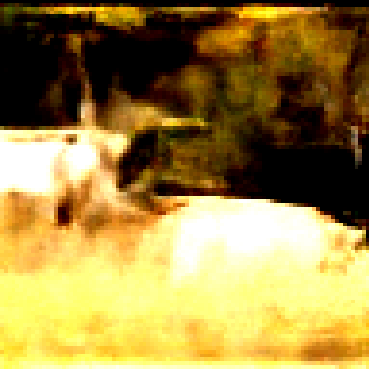} &
\includegraphics[width=2.25cm,valign=c]{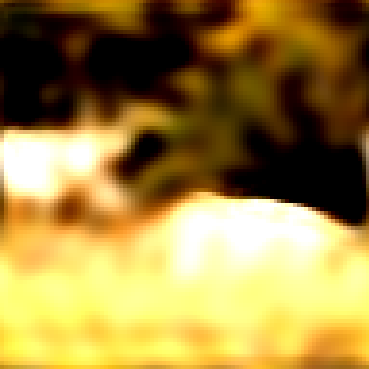} &
\includegraphics[width=2.25cm,valign=c]{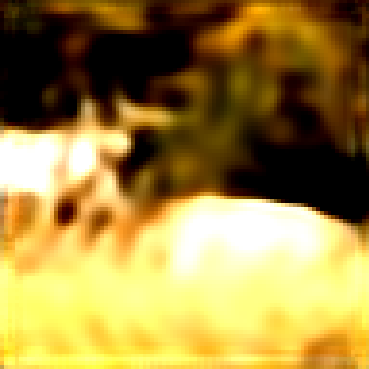}
\\ \rotatebox{90}{$s=5$} &  \includegraphics[width=2.25cm,valign=c]{images/s1_img7_gt.png}  &
\includegraphics[width=2.25cm,valign=c]{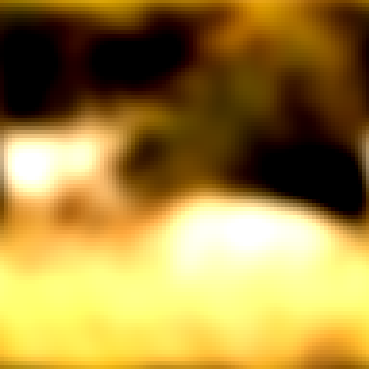} &
\includegraphics[width=2.25cm,valign=c]{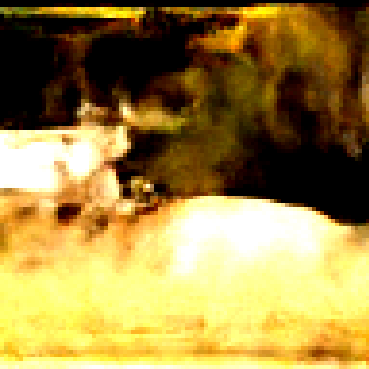} &
\includegraphics[width=2.25cm,valign=c]{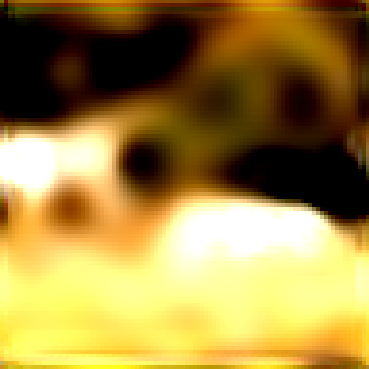} &
\includegraphics[width=2.25cm,valign=c]{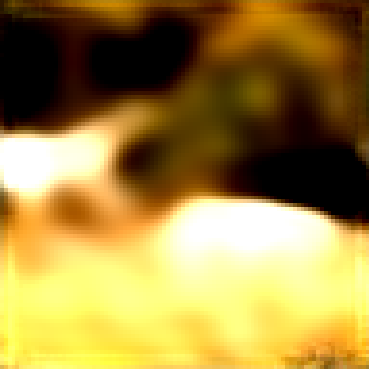}
\\ \rotatebox{90}{$s=7$}  &  \includegraphics[width=2.25cm,valign=c]{images/s1_img7_gt.png}  &
\includegraphics[width=2.25cm,valign=c]{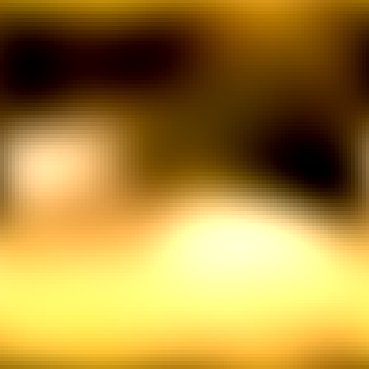} &
\includegraphics[width=2.25cm,valign=c]{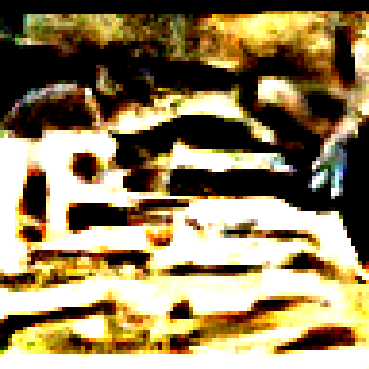} &
\includegraphics[width=2.25cm,valign=c]{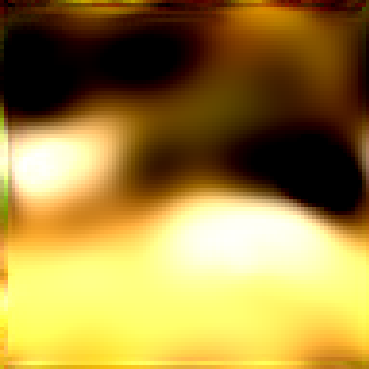} &
\includegraphics[width=2.25cm,valign=c]{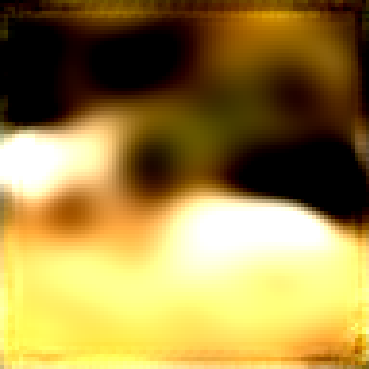}
\\ \rotatebox{90}{$s=9$} & \includegraphics[width=2.25cm,valign=c]{images/s1_img7_gt.png}  &
\includegraphics[width=2.25cm,valign=c]{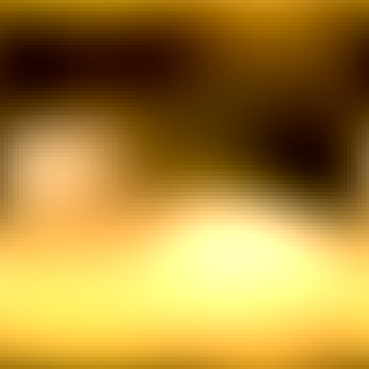} &
\includegraphics[width=2.25cm,valign=c]{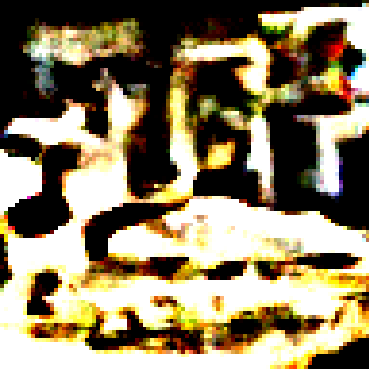} &
\includegraphics[width=2.25cm,valign=c]{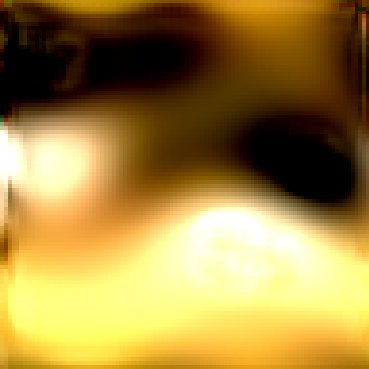} &
\includegraphics[width=2.25cm,valign=c]{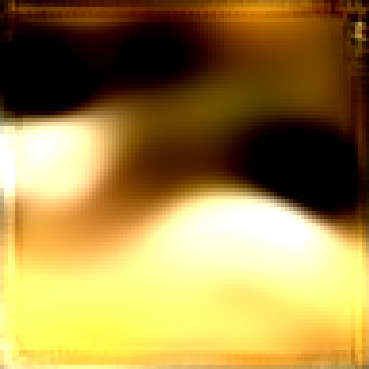}
\\ 
& (a)  & (b)  & (c) & (d) & (e) 
\end{tabular}
\end{center}
\caption{An additional example of the recovery of deblurred  images from the STL-10 data set. 
(a) Ground truth (same for all rows)
(b) Observed data,   
(c) Diffusion,  
(d) Proximal, (e) EUnet. Table \ref{tab2} reports numerical recovery results.}
\label{fig:additional2}
\end{figure}

 \begin{figure}[t]
\begin{center}
\begin{tabular}{c|c|c|c|c|c}
\centering \rotatebox{90}{$s=1$} & 
\includegraphics[width=2.25cm,valign=c]{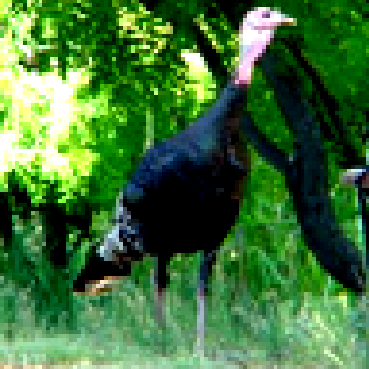}  &
\includegraphics[width=2.25cm,valign=c]{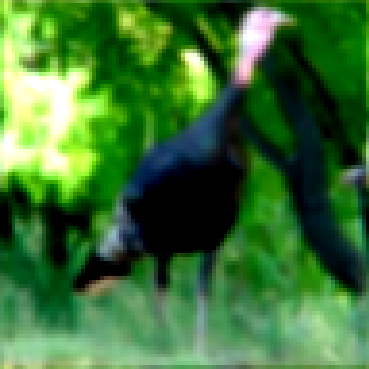} &
\includegraphics[width=2.3cm,valign=c]{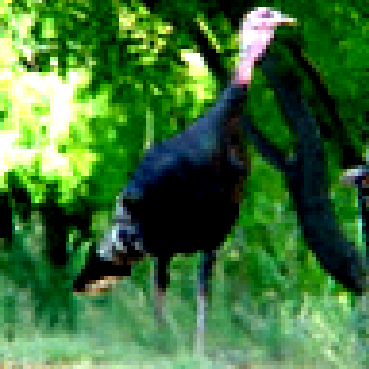}  &
\includegraphics[width=2.20cm,valign=c]{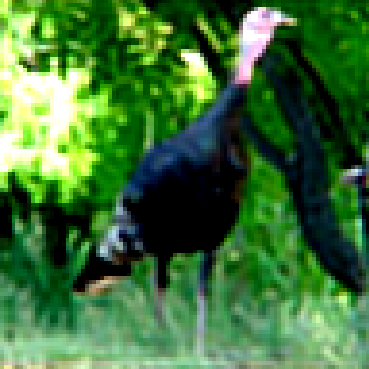} &
\includegraphics[width=2.25cm,valign=c]{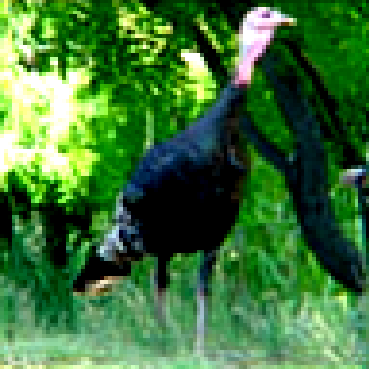}
\\ \rotatebox[origin=t]{90}{$s=3$} & \includegraphics[width=2.25cm,valign=c]{images/s1_img2_gt.png} &
\includegraphics[width=2.25cm,valign=c]{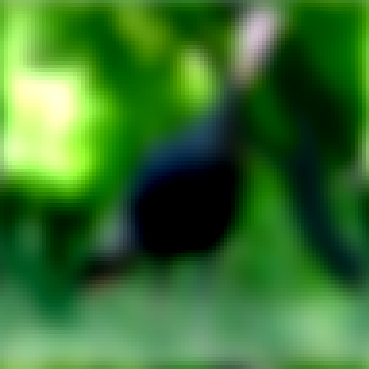} &
\includegraphics[width=2.25cm,valign=c]{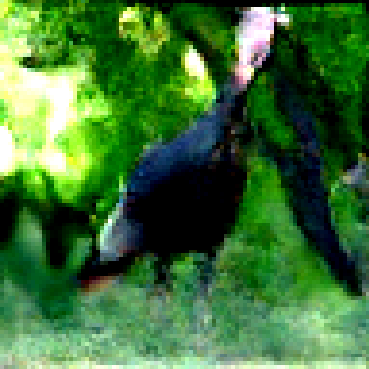} &
\includegraphics[width=2.25cm,valign=c]{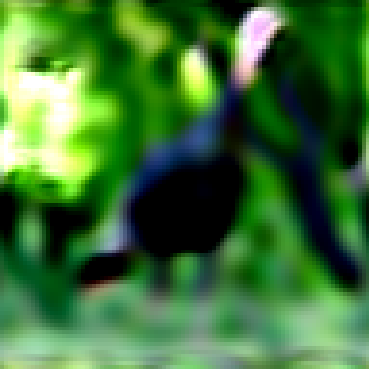} &
\includegraphics[width=2.25cm,valign=c]{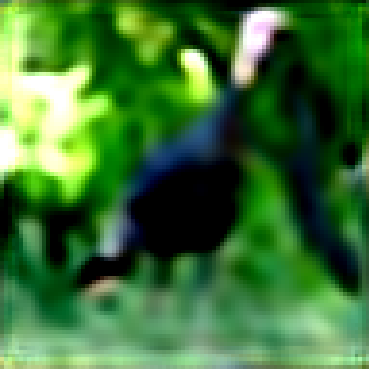}
\\ \rotatebox{90}{$s=5$} &  \includegraphics[width=2.25cm,valign=c]{images/s1_img2_gt.png}  &
\includegraphics[width=2.25cm,valign=c]{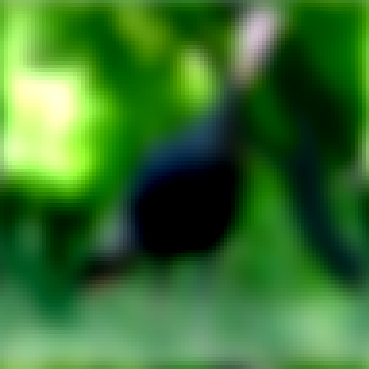} &
\includegraphics[width=2.25cm,valign=c]{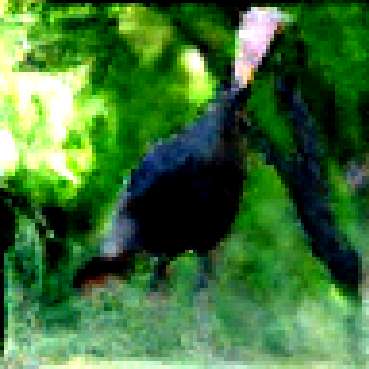} &
\includegraphics[width=2.25cm,valign=c]{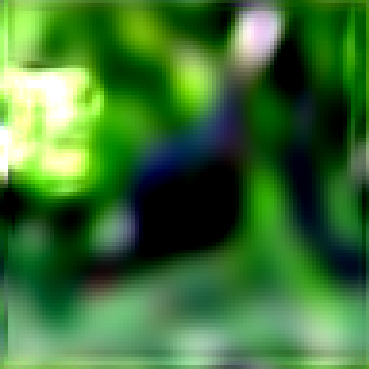} &
\includegraphics[width=2.25cm,valign=c]{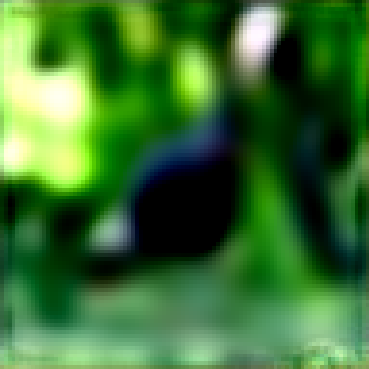}
\\ \rotatebox{90}{$s=7$}  &  \includegraphics[width=2.25cm,valign=c]{images/s1_img2_gt.png}  &
\includegraphics[width=2.25cm,valign=c]{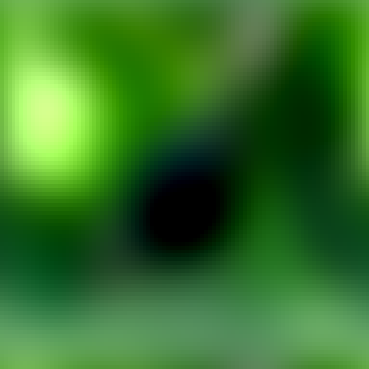} &
\includegraphics[width=2.25cm,valign=c]{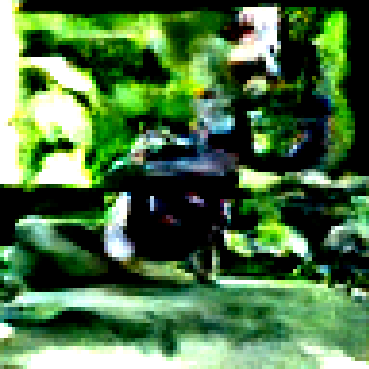} &
\includegraphics[width=2.25cm,valign=c]{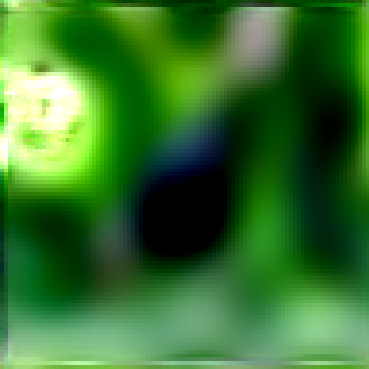} &
\includegraphics[width=2.25cm,valign=c]{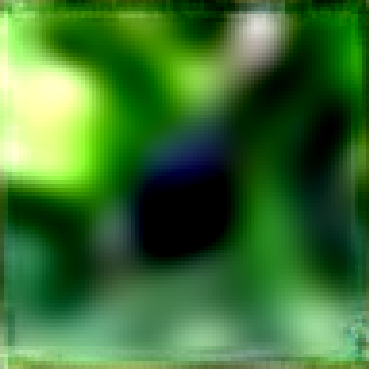}
\\ \rotatebox{90}{$s=9$} & \includegraphics[width=2.25cm,valign=c]{images/s1_img2_gt.png}  &
\includegraphics[width=2.25cm,valign=c]{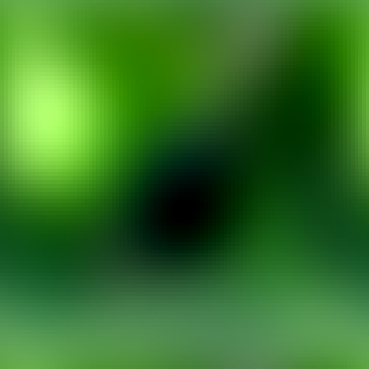} &
\includegraphics[width=2.25cm,valign=c]{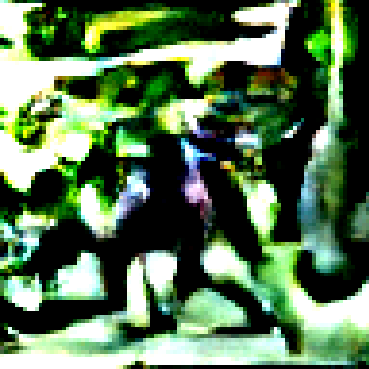} &
\includegraphics[width=2.25cm,valign=c]{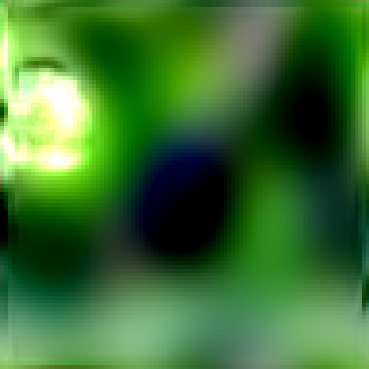} &
\includegraphics[width=2.25cm,valign=c]{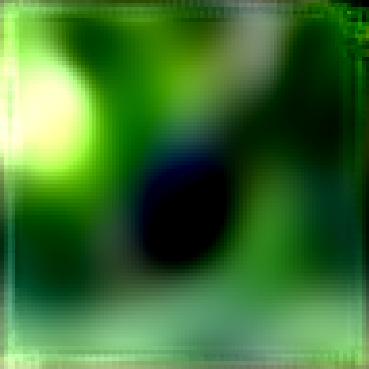}
\\ 
& (a)  & (b)  & (c) & (d) & (e) 
\end{tabular}
\end{center}
\caption{Additional example of the recovery of deblurred images from the STL-10 data set. 
(a) Ground truth (same for all rows)
(b) Observed data,   
(c) Diffusion,  
(d) Proximal, (e) EUnet. Table \ref{tab2} reports numerical recovery results.}
\label{fig:additional3}
\end{figure}

\clearpage

\textbf{Convergence of EUnet.}
We report the convergence plots of the Proximal \cite{mardani2018neural} method and our OPTEnet and EUnet, in Figure~\ref{fig:magconv}. We observe that our EUnet offers faster convergence, in addition to improved performance in terms of recovery.
\begin{figure}[h]
    \centering
    \includegraphics[width=0.9\linewidth, height=0.7\linewidth]{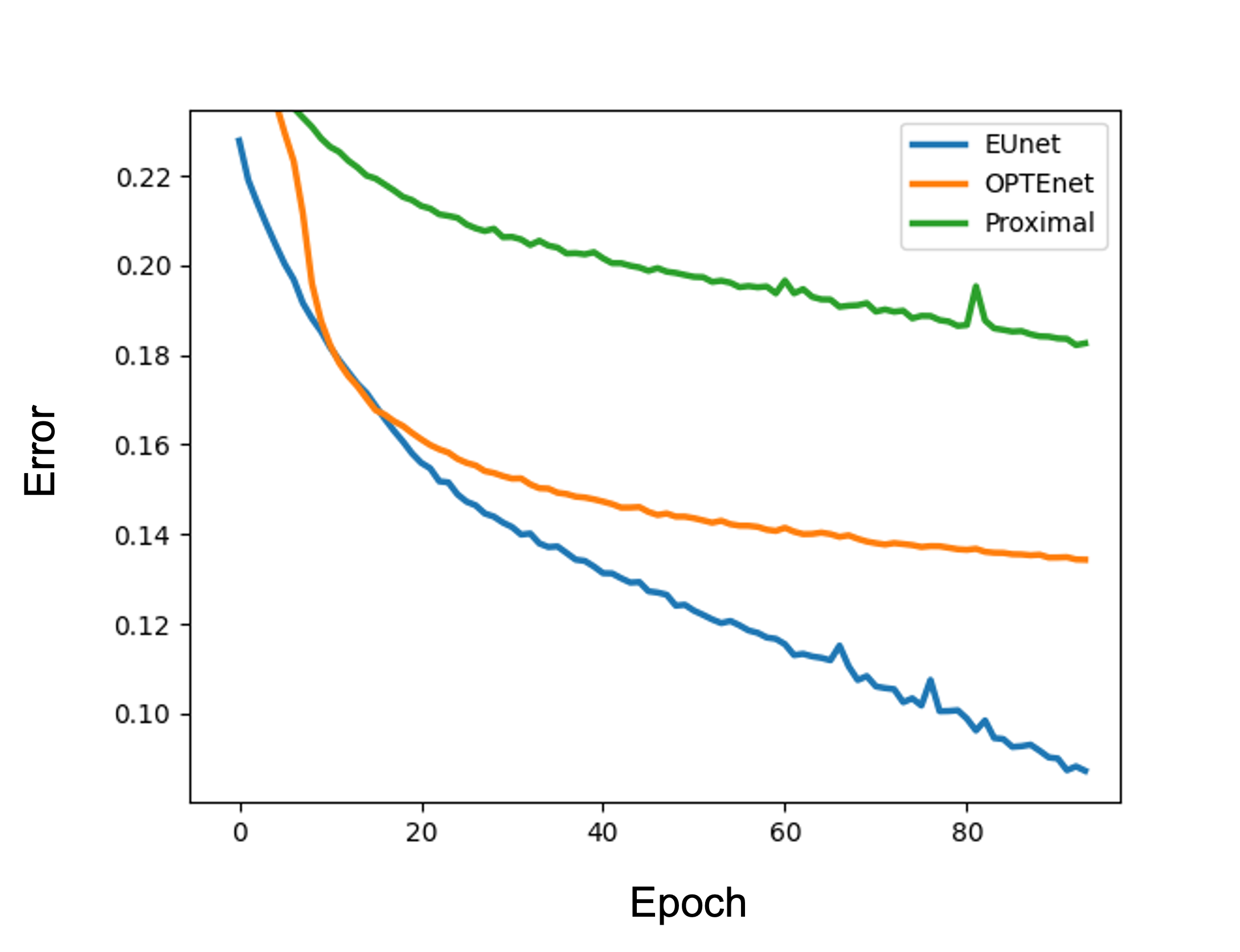}
    \caption{Convergence history for different methods in the magnetics experiment. \label{fig:magconv}}
\end{figure}

\bibliographystyle{plain}

\bibliography{iclr2021_conference}

\begin{thebibliography}{10}

\bibitem{adler2017solving}
Jonas Adler and Ozan {\"O}ktem.
\newblock Solving ill-posed inverse problems using iterative deep neural
  networks.
\newblock {\em Inverse Problems}, 33(12):124007, 2017.

\bibitem{agarwal2014learning}
Alekh Agarwal, Animashree Anandkumar, Prateek Jain, Praneeth Netrapalli, and
  Rashish Tandon.
\newblock Learning sparsely used overcomplete dictionaries.
\newblock In {\em Conference on Learning Theory}, pages 123--137. PMLR, 2014.

\bibitem{aharon2006k}
Michal Aharon, Michael Elad, and Alfred Bruckstein.
\newblock K-svd: An algorithm for designing overcomplete dictionaries for
  sparse representation.
\newblock {\em IEEE Transactions on signal processing}, 54(11):4311--4322,
  2006.

\bibitem{aharon2006uniqueness}
Michal Aharon, Michael Elad, and Alfred~M Bruckstein.
\newblock On the uniqueness of overcomplete dictionaries, and a practical way
  to retrieve them.
\newblock {\em Linear algebra and its applications}, 416(1):48--67, 2006.

\bibitem{eladReview}
A.M. Bruckstein, D.L. Donoho, and M.~Elad.
\newblock From sparse solutions of systems of equations to sparse modeling of
  signals and images.
\newblock {\em SIAM Review}, 51:34--81, 2009.

\bibitem{CandesRombergTao2006}
E.J. Candes, J~Romberg, and T~Tao.
\newblock Robust uncertainty principles: Exact signal reconstruction from
  highly incomplete data.
\newblock {\em IEEE Transactions on information theory}, 4, 2006.

\bibitem{chen2001atomic}
Scott~Shaobing Chen, David~L Donoho, and Michael~A Saunders.
\newblock Atomic decomposition by basis pursuit.
\newblock {\em SIAM review}, 43(1):129--159, 2001.

\bibitem{chen2018neural}
Tian~Qi Chen, Yulia Rubanova, Jesse Bettencourt, and David~K Duvenaud.
\newblock Neural ordinary differential equations.
\newblock In {\em Advances in Neural Information Processing Systems}, pages
  6571--6583, 2018.

\bibitem{chung2022diffusion}
Hyungjin Chung, Jeongsol Kim, Michael~T Mccann, Marc~L Klasky, and Jong~Chul
  Ye.
\newblock Diffusion posterior sampling for general noisy inverse problems.
\newblock {\em arXiv preprint arXiv:2209.14687}, 2022.

\bibitem{chung2022improving}
Hyungjin Chung, Byeongsu Sim, Dohoon Ryu, and Jong~Chul Ye.
\newblock Improving diffusion models for inverse problems using manifold
  constraints, 2022.

\bibitem{chung2022come}
Hyungjin Chung, Byeongsu Sim, and Jong~Chul Ye.
\newblock Come-closer-diffuse-faster: Accelerating conditional diffusion models
  for inverse problems through stochastic contraction.
\newblock In {\em Proceedings of the IEEE/CVF Conference on Computer Vision and
  Pattern Recognition}, pages 12413--12422, 2022.

\bibitem{coates2011analysis}
Adam Coates, Andrew Ng, and Honglak Lee.
\newblock An analysis of single-layer networks in unsupervised feature
  learning.
\newblock In {\em Proceedings of the 14th AISTATS}, pages 215--223, 2011.

\bibitem{croitoru2023diffusion}
Florinel-Alin Croitoru, Vlad Hondru, Radu~Tudor Ionescu, and Mubarak Shah.
\newblock Diffusion models in vision: A survey.
\newblock {\em IEEE Transactions on Pattern Analysis and Machine Intelligence},
  2023.

\bibitem{degroot2005optimal}
Morris~H DeGroot.
\newblock {\em Optimal statistical decisions}.
\newblock John Wiley \& Sons, 2005.

\bibitem{drip}
Moshe Eliasof, Eldad Haber, and Eran Treister.
\newblock Drip: deep regularizers for inverse problems.
\newblock {\em Inverse Problems}, 40, 11 2023.

\bibitem{haten}
E.~Haber and L.~Tenorio.
\newblock Learing regularization functionals a suprvised training approach.
\newblock {\em Inverse Problems}, 19:611--626, 2003.
\newblock n3.

\bibitem{haber2019imexnet}
Eldad Haber, Keegan Lensink, Eran Triester, and Lars Ruthotto.
\newblock Imexnet: A forward stable deep neural network.
\newblock {\em arXiv preprint arXiv:1903.02639}, 2019.

\bibitem{HaberRuthotto2017}
Eldad Haber and Lars Ruthotto.
\newblock Stable architectures for deep neural networks.
\newblock {\em arxiv preprint 1705.03341}, abs/1705.03341:1--21, 2017.

\bibitem{harrison1983instantaneous}
J~Michael Harrison and Michael~I Taksar.
\newblock Instantaneous control of brownian motion.
\newblock {\em Mathematics of Operations research}, 8(3):439--453, 1983.

\bibitem{HoreshHaber2011}
L.~Horesh and E.~Haber.
\newblock A second order discretization of {M}axwell's equations in the
  quasi-static regime on octree grids.
\newblock {\em SIAM J. Sci. Comput}, 33:2805--2822, 2011.

\bibitem{huang2013optimal}
Hui Huang, Eldad Haber, Lior Horesh, and JK~Seo.
\newblock Optimal estimation of l1 regularization prior from a regularized
  empirical bayesian risk standpoint.
\newblock {\em Inverse Problems and Imaging}, 2013.

\bibitem{jin2017deep}
Kyong~Hwan Jin, Michael~T McCann, Emmanuel Froustey, and Michael Unser.
\newblock Deep convolutional neural network for inverse problems in imaging.
\newblock {\em IEEE Transactions on Image Processing}, 26(9):4509--4522, 2017.

\bibitem{somersallo}
J.~Kaipio and E.~Somersalo.
\newblock {\em Statistical and Computational Inverse Problems.}
\newblock Springer Verlag, 2004.

\bibitem{kasiviswanathan2012online}
Shiva Kasiviswanathan, Huahua Wang, Arindam Banerjee, and Prem Melville.
\newblock Online l1-dictionary learning with application to novel document
  detection.
\newblock {\em Advances in neural information processing systems}, 25, 2012.

\bibitem{mnistlecun2009}
Yann Lecun and Corinna Cortes.
\newblock {The MNIST database of handwritten digits}.

\bibitem{mardani2018neural}
Morteza Mardani, Qingyun Sun, David Donoho, Vardan Papyan, Hatef Monajemi,
  Shreyas Vasanawala, and John Pauly.
\newblock Neural proximal gradient descent for compressive imaging.
\newblock {\em Advances in Neural Information Processing Systems}, 31, 2018.

\bibitem{martin2012stochastic}
James Martin, Lucas~C Wilcox, Carsten Burstedde, and Omar Ghattas.
\newblock A stochastic newton mcmc method for large-scale statistical inverse
  problems with application to seismic inversion.
\newblock {\em SIAM Journal on Scientific Computing}, 34(3):A1460--A1487, 2012.

\bibitem{mittelholz2022martian}
Anna Mittelholz and Catherine~L Johnson.
\newblock The martian crustal magnetic field.
\newblock {\em Frontiers in Astronomy and Space Sciences}, 9:895362, 2022.

\bibitem{mukherjee2021learning}
Subhadip Mukherjee, Carola-Bibiane Schönlieb, and Martin Burger.
\newblock Learning convex regularizers satisfying the variational source
  condition for inverse problems, 2021.

\bibitem{nagyHansenBook}
J.~Nagy and P.C. Hansen.
\newblock {\em Deblurring Images}.
\newblock SIAM, Philadelphia, 2006.

\bibitem{nw}
J.~Nocedal and S.~Wright.
\newblock {\em Numerical Optimization}.
\newblock Springer, New York, 1999.

\bibitem{ongie2020deep}
Gregory Ongie, Ajil Jalal, Christopher~A Metzler, Richard~G Baraniuk,
  Alexandros~G Dimakis, and Rebecca Willett.
\newblock Deep learning techniques for inverse problems in imaging.
\newblock {\em IEEE Journal on Selected Areas in Information Theory},
  1(1):39--56, 2020.

\bibitem{parikh2014proximal}
Neal Parikh, Stephen Boyd, et~al.
\newblock Proximal algorithms.
\newblock {\em Foundations and trends{\textregistered} in Optimization},
  1(3):127--239, 2014.

\bibitem{parker}
R.~L. Parker.
\newblock {\em Geophysical Inverse Theory}.
\newblock Princeton University Press, Princeton NJ, 1994.

\bibitem{pastor1994techniques}
RW~Pastor.
\newblock Techniques and applications of langevin dynamics simulations.
\newblock In {\em The Molecular Dynamics of Liquid Crystals}, pages 85--138.
  Springer, 1994.

\bibitem{RudinOsherFatemi92}
L.~Rudin, S.~Osher, and E.~Fatemi.
\newblock Nonlinear total variation based noise removal algorithms.
\newblock In {\em Proceedings of the eleventh annual international conference
  of the Center for Nonlinear Studies on Experimental mathematics :
  computational issues in nonlinear science}, pages 259--268. Elsevier
  North-Holland, Inc., 1992.

\bibitem{song2022solving}
Yang Song, Liyue Shen, Lei Xing, and Stefano Ermon.
\newblock Solving inverse problems in medical imaging with score-based
  generative models, 2022.

\bibitem{strang1991calculus}
Gilbert Strang.
\newblock {\em Calculus}, volume~1.
\newblock SIAM, 1991.

\bibitem{taran}
A.~Tarantola.
\newblock {\em Inverse problem theory}.
\newblock Elsevier, Amsterdam, 1987.

\bibitem{Tenorio2011}
L.~Tenorio, F.~Andersson, M.~de~Hoop, and P.~Ma.
\newblock Data analysis tools for uncertainty quantification of inverse
  problems.
\newblock {\em Inverse Problems}, page 045001, 2011.

\bibitem{vogel2002computational}
Curtis~R Vogel.
\newblock {\em Computational methods for inverse problems}, volume~23.
\newblock SIAM, Philadelphia, 2002.

\bibitem{E2017}
E~Weinan.
\newblock {A Proposal on Machine Learning via Dynamical Systems}.
\newblock {\em Communications in Mathematics and Statistics}, 5(1):1--11, March
  2017.

\bibitem{yang2022diffusion}
Ling Yang, Zhilong Zhang, Yang Song, Shenda Hong, Runsheng Xu, Yue Zhao,
  Yingxia Shao, Wentao Zhang, Bin Cui, and Ming-Hsuan Yang.
\newblock Diffusion models: A comprehensive survey of methods and applications.
\newblock {\em arXiv preprint arXiv:2209.00796}, 2022.

\bibitem{ANODEV2}
Tianjun Zhang, Zhewei Yao, Amir Gholami, Kurt Keutzer, Joseph Gonzalez, George
  Biros, and Michael Mahoney.
\newblock Anodev2: A coupled neural ode evolution framework.
\newblock {\em arXiv preprint arXiv:1906.04596}, 2019.

\end{thebibliography}

\end{document}